\newtheorem{theorem}{Theorem}
 \newtheorem*{theorem*}{\protect\theoremname}
 \providecommand{\theoremname}{Theorem}
  \theoremstyle{remark}
  \newtheorem*{rem*}{\protect\remarkname}
  \newtheorem{lem}[theorem]{Lemma}
  \theoremstyle{definition}
 \theoremstyle{definition}
 \newtheorem*{defn*}{\protect\definitionname}  
\theoremstyle{plain}
\newtheorem*{cor*}{\protect\corollaryname}
\providecommand{\corollaryname}{Corollary}
  \providecommand{\remarkname}{Remark}
  \providecommand{\definitionname}{Definition}
\newcommand{\cmtt}[1]{{\fontfamily{cmtt}\selectfont #1}}
\title{
Task Agnostic Robust Learning on Corrupt Outputs \\
by Correlation-Guided Mixture Density Networks
}
\author{Sungjoon Choi\\
Kakao Brain\\
{\small \cmtt{sam.choi@kakaobrain.com}}
\and
Sanghoon Hong\\
Kakao Brain\\
{\small \cmtt{sanghoon.hong@kakaobrain.com}}
\and
Kyungjae Lee\\
Seoul National University\\
{\small \cmtt{kyungjae.lee@rllab.snu.ac.kr}}
\and
Sungbin Lim\thanks{This work is done at Kakao Brain}\\
UNIST\\
{\small \cmtt{sungbin@unist.ac.kr}}
}
\begin{document}
\maketitle

%
% Abstract
%
\begin{abstract}
In this paper, we focus on weakly supervised learning
with noisy training data for both classification
and regression problems.
We assume that the training outputs are collected from a mixture of
a target and correlated noise distributions.
Our proposed method
simultaneously estimates the target distribution and 
the quality of each data
which is defined as the correlation between
the target and data generating distributions.
The cornerstone of the proposed method is
a Cholesky Block
that enables modeling dependencies among mixture 
distributions in a differentiable manner
where we maintain the distribution over the network weights.
We first provide illustrative examples in both regression and classification
tasks to show the effectiveness of the proposed method.
Then, the proposed method is extensively evaluated in a number of experiments
where we show that it constantly shows
comparable or superior performances
compared to existing baseline methods in the handling of noisy data.
\end{abstract}

%
% Introduction
%
\section{Introduction}

Training a deep neural network requires 
immense amounts of training data
which are often collected using 
crowd-sourcing methods,
such as Amazon’s Mechanical Turk (AMT). 
However, in practice, the crowd-sourced labels are often noisy
\cite{Bi_14_Crowdsource}.
Furthermore, naive training of deep neural networks is often vulnerable to
over-fitting given the noisy training data 
in that they are capable of memorizing
the entire dataset even with inconsistent labels,
leading to a poor generalization performance
\cite{Zhang_17_rethinking}. 
We address this problem through the principled idea of
\emph{revealing the correlations} between the target distribution
and the other (possibly noise) distributions
by assuming that a training dataset is sampled from 
a mixture of a target distribution 
and other correlated distributions.

Throughout this paper,
we aim to address the following two questions:
1) How can we define (or measure) the quality of training data in a principled manner?
2) In the presence of inconsistent outputs, how can we infer the target distribution
in a scalable manner?
Traditionally, noisy outputs are handled by modeling additive random distributions,
often leading to robust loss functions \cite{Hampel_11_Robust} or
estimating the structures of label corruptions in classifications tasks 
\cite{Jindal_16_noisyLabel} (for more details, refer to Section \ref{sec:rel}).

To address the first question,
we leverage the concept of a correlation.
Precisely, we define and measure 
the quality of training data
using the correlation between the target distribution
and the data generating distribution. 
However, estimating the correct correlation 
requires access to the target distribution,
whereas learning the correct target distribution 
requires knowing the correlation between
the distributions to be known,
making it a chicken-and-egg problem. 
To address the second question,
we present a novel method
that simultaneously estimates the target distribution
as well as the correlation
in a fully differentiable manner
using stochastic gradient descent methods.
%to achieve scalability.  

The cornerstone of the proposed method is
a \emph{Cholesky Block}
in which we employ the Cholesky transform
for sampling the weights of a neural network
that enables us to model correlated outputs.
Similar to Bayesian neural networks \cite{Blundell_15},
we maintain the probability distributions over
the weights, but we also leverage mixture distributions
to handle the inconsistencies in a dataset.
%We also present an effective regularizer 
%to train ChoiceNet.
To the best of our knowledge, 
this is the first approach
simultaneously to infer the target distribution and 
the output correlations using a neural network
in an end-to-end manner.
We will refer to this framework as \emph{ChoiceNet}.

% Experiments
\emph{ChoiceNet} is first applied to synthetic regression tasks
and a real-world regression task
where we demonstrate its robustness to extreme outliers
and its ability to distinguish the target distribution
and noise distributions. 
Subsequently, we move on to image classification tasks
using a number of benchmark datasets
where we show that it shows comparable or superior performances
compared to existing baseline methods in terms of robustness
with regard to handling different types of noisy labels.
% We also show that it is combinable with existing robust learning methods
% such as mix-up \cite{Zhang_18_mixup}. 

%
% Related Work
%
\section{Related Work}
\label{sec:rel}

% Intro to Related Work 
Recently, robustness in deep learning has been 
actively studied \cite{Fawzi_17_robustness}
as deep neural networks are being applied to 
diverse tasks involving real-world applications such as 
autonomous driving \cite{Paden_16} 
or medical diagnosis \cite{Gulshan_16_MedicalDeepLearning}
where a simple malfunction can have catastrophic results
\cite{Tesla_16}. 

% Categorization 
Existing work for handing noisy training data can be categorized into 
four groups:
small-loss tricks % 1.
\cite{Jiang_17_mentornet,Ren_18,Han_18, Malach_17_Decoupling},
estimating label corruptions % 2. 
\cite{Patrini_17_LossCorrection,Goldberger_17_NoiseAdaptation,
Sukhbaatar_14, Bekker_16_unreliableLabel, Hendrycks_18_GLC, Veit_17_noisy},
using robust loss functions % 3. 
\cite{Natarajan_13_NoisyLabel, Belagiannis_15_RobustReg},
and explicit and implicit regularization methods % 4. 
\cite{Reed_14_bootstrap,Lee_13_pseudo, Goodfellow_16_DLbook, Xie_16_disturblabel, Tokozume_18_btwClassEx,Zhang_18_mixup,Miyato18,Tarvainen_17,Laine_17}.
Our proposed method is mostly related to the robust loss function approach but 
cannot fully be categorized into this group in that
we present a novel architecture, a mixture of correlated densities network block,
for achieving robustness based on the correlation estimation. 

% 1. Small-loss trick 
First of all, the small-loss tricks selectively focus on training instances based on 
a certain criterion, such as having small cost values  \cite{Han_18}. 
\cite{Malach_17_Decoupling} proposed a meta-algorithm
for tackling the noisy label problem by training 
two networks only when the predictions 
of the two networks disagree, where
selecting a proper network from among the two networks
can be done using an additional clean dataset.
\cite{Ren_18} reweighs the weight of each training instance 
using a small amount of clean validation data. 
MentorNet \cite{Jiang_17_mentornet} 
concentrated on the training of an additional neural network,
which assigns a weight to each instance of training data
to supervise the training of a base network,
termed StudentNet,
to overcome the over-fitting of corrupt training data.
Recent work \cite{Han_18} presented Co-teaching
by maintaining two separate networks where each 
network is trained with small-loss instances selected
from its peer network. 

% 2. Estimating label corruptions
The second group of estimating label corruption information
is mainly presented for classification tasks where
training labels are assumed to be corrupt 
with a possibly unknown corruption matrix. 
An earlier study in \cite{Bekker_16_unreliableLabel}
proposed an extra layer for the modeling of output noises.
\cite{Jindal_16_noisyLabel} extended 
the approach mentioned above
by adding an additional noise adaptation layer with
aggressive dropout regularization. 
A similar method was then proposed in
\cite{Patrini_17_LossCorrection}
which initially estimated 
the label corruption matrix with 
a learned classifier and used the corruption matrix 
to fine-tune the classifier. 
Other researchers
\cite{Goldberger_17_NoiseAdaptation} 
presented a robust training method
that mimics the EM algorithm to train a neural network,
with the label noise modeled as
an additional softmax layer, similar to earlier work
\cite{Jindal_16_noisyLabel}. 
A self-error-correcting network was also presented
\cite{Liu_17_SelfCorrect}.
It switches the training labels based on the learned model 
at the beginning stages by assuming that the deep model 
is more accurate during the earlier stage of training.

% 3. Robust loss functions 
Researchers have also focussed on using robust loss functions;
\cite{Natarajan_13_NoisyLabel} studied the problem of binary classification 
in the presence of random labels and 
presented a robust surrogate loss function for handling noisy labels.
Existing loss functions for classification were studied 
\cite{Ghosh_17_robustLoss}, with the results showing that 
the mean absolute value of error is inherently robust to label noise. 
% Regression
In other work \cite{Belagiannis_15_RobustReg}, a robust loss function for
deep regression tasks was proposed using Tukey's biweight function with
the median absolute deviation of the residuals.

% 4. Implicit and explicit regularization 
The last group focusses on using implicit or
explicit regularization methods while training.
Adding small label noises while training 
is known to be beneficial to training, as it can be regarded as 
an effective regularization method
\cite{Lee_13_pseudo, Goodfellow_16_DLbook}.
Similar methods have been proposed to tackle 
noisy outputs. 
A bootstrapping method
\cite{Reed_14_bootstrap}
which trains a neural network with a convex combination of 
the output of the current network and the noisy target was proposed. 
\cite{Xie_16_disturblabel} proposed DisturbLabel, 
a simple method that randomly replaces a percentage of the labels 
with incorrect values for each iteration.
Mixing both input and output data was also proposed 
\cite{Tokozume_18_btwClassEx,Zhang_18_mixup}. 
One study
\cite{Zhang_18_mixup} considered the
image recognition problem under label noise 
and the other \cite{Tokozume_18_btwClassEx}
focused on a sound recognition problem. 
% Temporal ensemble
The temporal ensemble was proposed in \cite{Laine_17} where 
an unsupervised loss term of fitting the output of an augmented input
to the augmented target updated with an exponential moving average. 
% Mean teachers
\cite{Tarvainen_17} extends the temporal ensemble in \cite{Laine_17}
by introducing a consistency cost function that minimizes 
the distance between the weights of 
the student model and the teacher model. 
% VAT
\cite{Miyato18} presented a new regularization method 
based on virtual adversarial loss which measures the smoothness of
conditional label distribution given input. 
Minimizing the virtual adversarial loss has a regularizing effect in that
it makes the model smooth at each data point.

% Output structure Modeling correlations 
The foundation of the proposed method is
a \emph{Cholesky Block}
where the output distribution is modeled using 
a mixture of correlated distributions. 
Modeling correlations of output training data has been 
actively studied in light of Gaussian processes
\cite{Rasmussen_06}.
MTGPP \cite{Bonilla_08_MTGPP}
that models the correlations of multiple tasks
via Gaussian process regression was proposed
in the context of a multi-task setting. 
\cite{SJChoi_16} proposed a robust learning from demonstration method
using a sparse constrained leverage optimization method
which estimates the correlation between the training outputs
and showed its robustness compared to several baselines.
%On the other hand, \cite{Platanios_16} presented a graphical model
%that can model
%the structure of the error rates of learned classifiers
%using non-parametric Bayesian methods.
%We would like to note that
%while our problem setting is similar to the latter study \cite{SJChoi_16}, 
%we propose end-to-end learning of both the target distribution 
%and the correlation of each training data, thus offering, 
%a clear advantage in terms of scalability. 

\begin{figure*}[!t] 
	\centering 
	\includegraphics[width=0.73\textwidth]
		{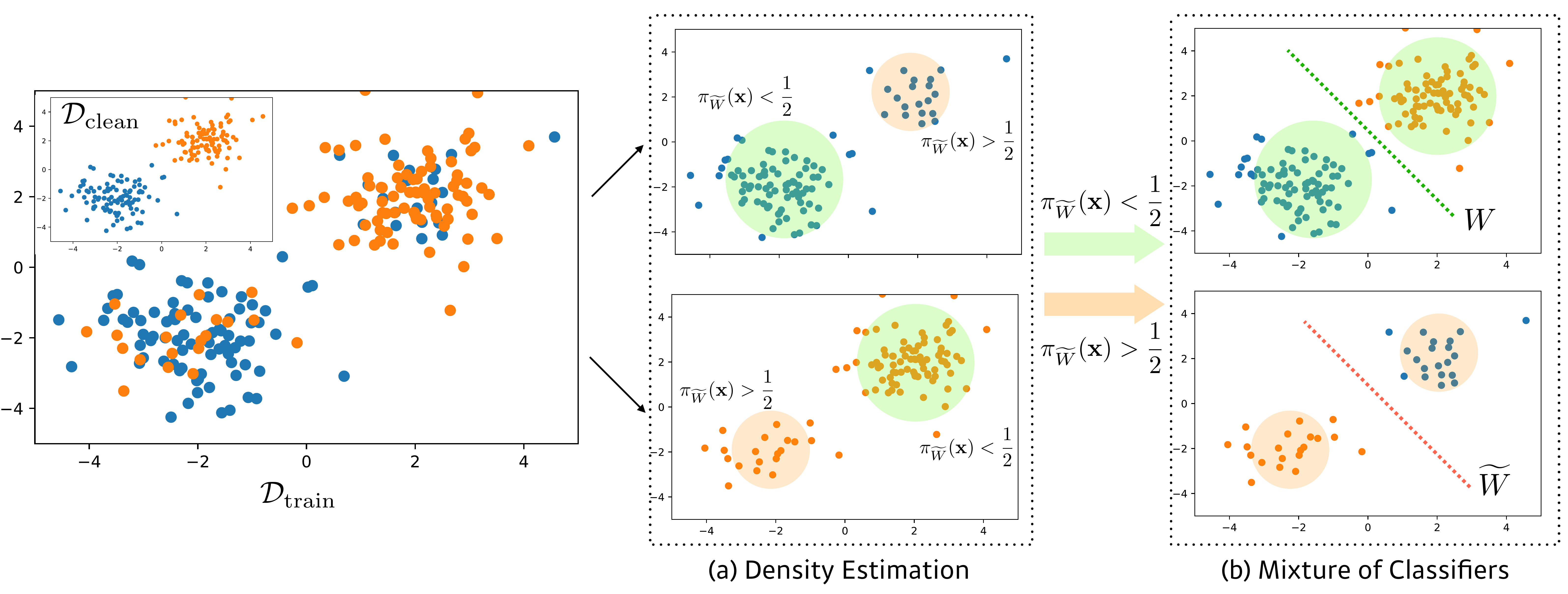}
	\caption{
	    A process of binary classification on corrupt data using the mixture of (a) densities and (b) classifiers through \eqref{eq:mixture-loss}. 
	    $\pi_{\widetilde{W}}(\mathbf{x})$ is the mixture weight which models the choice probability of $\widetilde{W}$ and estimates the corruption probability $\pi_{\text{corrupt}}$ in \eqref{eq:data-generating}. }
	   	\label{fig:mixture}
\end{figure*}

%
% Proposed Method
%
\section{Proposed Method}
\label{sec:prop}

In this section, we present the motivation
and the model architecture of the proposed method. 
The main ingredient is a \emph{Cholesky Block}
which can be built on top of any arbitrary base network.
First, we illustrate the motivations of 
designing the \emph{Cholesky Block} 
in Section \ref{subsec:motivation}.
Section \ref{subsec:sample_corr} introduces a Cholesky transform 
which enables correlated sampling procedures.
% Every proof of theorems can be found in 
% the supplementary material.
Subsequently, we present the overall mechanism of the proposed method 
and its loss functions for regression and classification tasks 
in Section \ref{subsec:model}
followed by
illustrative synthetic examples
in Section \ref{subsec:ill}.
% Experimental results on both regression and classification tasks are
% shown in Section \ref{sec:exp}.

\subsection{Motivation}
\label{subsec:motivation}

% Denote an unknown clean dataset by $\mathcal{D}_{\text{clean}}$
% whose elements $(\mathbf{x},y)\in\mathcal{D}_{\text{clean}}$ are
% determined by a relation $y=f(\mathbf{x})$. For a classification
% task, an accurate label $y\in\{0,1\}$ exists for each $\mathbf{x}$.
% We assume a corrupt data $(\mathbf{x},\hat{y})\in\mathcal{D}_{\text{corrupt}}$
% is given such that

Denote training data with correct (clean) labels 
by $\mathcal{D}_{\text{clean}}$
whose elements $(\mathbf{x},y)\in\mathcal{D}_{\text{clean}}$ are
determined by a relation $y=f(\mathbf{x})$ for a regression task
and $y\in L$ for a classification task where $L$ is a discrete set.
In this paper,
we assume that the corrupt training data 
$(\mathbf{x},\hat{y})\in\mathcal{D}_{\text{train}}$
is generated by

\textbf{Regression}:
\begin{align}
    \label{eq:regression}
	\hat{y}=\begin{cases}
	f(\mathbf{x})+\epsilon & \text{with }1-p\\
	g(\mathbf{x})+\xi & \text{with }p
	\end{cases}
\end{align}

\textbf{Classification}:
\begin{align}
    \label{eq:classification}
	\hat{y}=\begin{cases}
	y & \text{with }1-p\\
	L \setminus\{y\} & \text{with }p
	\end{cases}
\end{align}
where $g$ is an arbitrary function. % which is different from $f$. 
Here $\epsilon$ and $\xi$ are heteroscedastic additive noises
and $p$ indicates the corruption probability. 
Then, the above settings naturally employ
the mixture of the conditional distributions:
\begin{equation}
    P(\hat{y}|\mathbf{x})
        =(1-\pi_{\text{corrupt}})P_{\text{target}}(\hat{y}|\mathbf{x})
        +\pi_{\text{corrupt}}P_{\text{noise}}(\hat{y}|\mathbf{x})
    \label{eq:data-generating}
\end{equation}
where $\pi_{\text{corrupt}}$ models the corrupt ratio $p$.
In particular, we model the target conditional density 
$P_{\text{target}}(\cdot|\cdot)$
using a parametrized distribution with
a Gaussian distribution with a fixed variance $\hat{\sigma}^{2}$,
i.e., 
$P(\cdot|\mathbf{x})=\mathcal{N}(f_{\mathbf{\theta}}(\mathbf{x}),\,\hat{\sigma}^{2})$
where $f_{\theta}(\cdot)$ 
can be any functions, e.g., a feed-forward network,
parametrized with $\theta$.

While it may look similar to
a Huber’s $\epsilon$-contamination model \cite{Huber_11},
one major difference is that,
our method quantifies the irrelevance (or independence)
of noise distributions by utilizing 
the input-dependent correlation $\rho(\mathbf{x})$ between 
$P_{\text{target}}(\cdot|\mathbf{x})$ and $P_{\text{noise}}(\cdot|\mathbf{x})$.
To be specific, $P_{\text{noise}}(\cdot|\mathbf{x})$ will be a function of
$P_{\text{target}}(\cdot|\mathbf{x})$ and $\rho(\mathbf{x})$.

In the training phase, we jointly optimize 
$P_{\text{target}}(\cdot|\mathbf{x})$ and $\rho(\mathbf{x})$.
%In order to explicitly estimate the target distribution $P$, 
%we fix the first mixture to always have $\rho=1$ and 
%jointly optimize the parameters of $P$ and correlations of other mixtures. 
% This makes a huge difference with respect to estimating the underlying target
%distribution from a noisy dataset which will be shown by the experimental results.
Intuitively speaking, irrelevant noisy data will be modeled to be
collected from a class of $P_{\text{noise}}$ with relatively small or negative $\rho$
which can be shown in Figure \ref{fig:corr_check}.
Since we assume that the correlation (quality) is not explicitly given, 
we model the $\rho$ of each data to be a function of an input $\mathbf{x}$
i.e., $\rho_{\phi}(\mathbf{x})$, parametrized by $\phi$ and jointly
optimize $\phi$ and $\theta$.

% where $g(\cdot)$ is an arbitrary function.
% Here $\varepsilon$ and $\xi$ are additive noise (usually heteroscedastic)
% and $p$ indicates the corruption (or mixture) probability. The above
% setting employs the random choice under Bernoulli distribution but
% one can consider a multinoulli distribution instead.
% The corrupt data can be modeled by the conditional density estimation
% via a mixture of distributions: 
% \[
% \hat{y}\sim\pi_{\text{target}}P(\hat{y}|\mathbf{x})+\pi_{\text{noise}}Q(\hat{y}|\mathbf{x})
% \]
% where $\pi_{\text{target}}$ and $\pi_{\text{noise}}$ 
% represent the ratio of target and noisy data, respectively. 

%
% Why correlation matters in mixture modeling?
%
\paragraph{Why correlation matters in mixture modeling?}

Let us suppose a binary classification problem 
$L = \{-1,1\}$ and
a feature map $h:\mathcal{X}\to\mathcal{F}$ and 
a (stochastic)
linear functional $W:\mathcal{F}\to\mathbb{R}$ are given. 
For $(\mathbf{x},y)\in\mathcal{D}_{\text{clean}}$, 
we expect that $W$ and $h$ will be optimized as follows:
\[
Wh(\mathbf{x})\sim\begin{cases}
\mathcal{N}(\mu_{+},\sigma^{2}) & :y=1\\
\mathcal{N}(\mu_{-},\sigma^{2}) & :y=-1
\end{cases},\quad\mu_{+}>0,\mu_{-}<0
\]
so that $-Wh(\mathbf{x)}\cdot y<0$ holds.
However, if corrupt training data 
% $(\mathbf{x},\hat{y})\in\text{\ensuremath{\mathcal{D}}}_{\text{train}}$
are given by \eqref{eq:classification}, 
the linear functional $W$ and the feature map $h(\cdot)$ 
may have $-Wh(\mathbf{x)}\cdot\hat{y}>0$
and using an ordinary loss function, such as a cross-entropy loss,
might lead to over-fitting of the contaminated pattern 
of $\mathcal{D}_{\text{train}}$.
Motivated from (\ref{eq:data-generating}),
we employ the mixture density to discriminate the corrupt
data by using another linear classifier $\widetilde{W}$ 
which is expected to reveal the reverse patterns
by minimizing the following mixture classification loss:
\begin{equation}
-\left\{ (1-\pi_{\widetilde{W}}(\mathbf{x}))Wh(\mathbf{x})-\pi_{\widetilde{W}}(\mathbf{x})\widetilde{W}h(\mathbf{x})\right\} \cdot\hat{y}\label{eq:mixture-loss}
\end{equation}
Here $\pi_{\widetilde{W}}(\mathbf{x})$ is the mixture weight which models
the choice probability of the reverse classifier $\widetilde{W}$ 
and eventually estimates the corruption probability $\pi_{\text{corrupt}}$ in \eqref{eq:data-generating}. 
See Figure \ref{fig:mixture} for the illustrative example 
in binary classification.

However, the above setting is not likely to work in practice
as both $W$ and $\widetilde{W}$ may learn
the corrupt patterns independently
hence $\pi_{\widetilde{W}}(\mathbf{x})$ adhere to $1/2$ under \eqref{eq:mixture-loss}.
In other words, the lack of dependencies between $W$ and $\widetilde{W}$
makes it hard to distinguish clean and corrupt patterns.
To aid $W$ to learn the pattern of data with a clean label, 
we need a self-regularizing way
to help $\pi_{\widetilde{W}}$ to infer the corruption probability of given data point $\mathbf{x}$ 
by guiding $\widetilde{W}$ to learn the reverse mapping of given feature $h(\mathbf{x})$.
% and filter corrupt.
To resolve this problem, let us consider different linear functional
$\widehat{W}$ with negative correlation with $W$, i.e., $\rho(\widehat{W},W)<0$.
Then this functional maps the feature $h(\mathbf{x})$ as follows: %opposite $W$ 
\[
\widehat{W}h(\mathbf{x})\sim\begin{cases}
\mathcal{N}(\rho\mu_{+},\sigma^{2}) & :y=1\\
\mathcal{N}(\rho\mu_{-},\sigma^{2}) & :y=-1
\end{cases}
\]
since $\rho(\widehat{W},W)=\rho(\widehat{W}h,Wh)$ so we have $-\widehat{W}h(\mathbf{x})\cdot\hat{y}<0$
if $\hat{y}=-y$. Eventually, \eqref{eq:mixture-loss} is minimized 
when $\pi_{\widehat{W}}(\mathbf{x})\approx1$
if output is corrupted, i.e., $\hat{y}=-y$, and $\pi_{\widetilde{W}}(\mathbf{x})\approx0$
otherwise. In this way, we can make $\pi_{\widehat{W}}(\mathbf{x})$
to learn the corrupt probability for given data point.

We provide illustrative examples in both regression and classification
that can support the aforementioned motivation
in Section \ref{subsec:ill}.

%
% Cholesky Block for Correlated Sampling
%
\subsection{Cholesky Block for Correlated Sampling}
\label{subsec:sample_corr}

Now we introduce a Cholesky transform that enables 
modeling dependencies among output mixtures in a differentiable manner.
The Cholesky transform is a way of constructing a random variable 
which is correlated with other random variables
and can be used as a sampling routine for sampling
correlated matrices from two uncorrelated random matrices.

To be specific,
suppose that $w \sim \mathcal{N}(\mu_w,\,\sigma^2_w)$ and 
$z \sim \mathcal{N}(0,\,\sigma^2_z)$ and our goal is to
construct a random variable $\tilde{w}$ such that
the correlation between $w$ and $\tilde{w}$ becomes $\rho$.
Then, the Cholesky transform which is defined as
\begin{equation}
\label{eq:cholesky}
    \begin{aligned}
&\text{\cmtt{Cholesky}}(w,z,\rho,\mu_{w},\sigma_{w},\sigma_{z})
\\ \quad&:=\rho\mu+\sqrt{1-\rho^{2}}
    \left(\rho\frac{\sigma_{z}}{\sigma_{w}}(w-\mu)+z\sqrt{1-\rho^{2}}\right)
    \end{aligned}
\end{equation}
is a mapping from 
$(w,z)\in\mathbb{R}^{2}$ to $\mathbb{R}$
and can be used to construct $\tilde{w}$.
In fact, $\tilde{w} = \text{\cmtt{Cholesky}}
    (w,z,\rho,\mu_{w},\sigma_{w},\sigma_{z})$ 
and we can easily use (\ref{eq:cholesky}) to 
construct a feed-forward layer 
with a correlated weight matrix
which will be referred to as
a \emph{Cholesky Block} as shown in Figure \ref{fig:cholesky_block}.

We also show that the correlation is preserved through 
the affine transformation
making it applicable to a single fully-connected layer
where all the derivations
and proofs can be found in the supplementary material.
In other words, we model correlated outputs by
first sampling correlated weight matrices using Cholesky transfrom
in an element-wise fashion 
and using the sampled weights for an affine transformation of 
a feature vector of a feed-forward layer. 
One can simply use a reparametrization trick \cite{kingma2017variational}
for implementations.

%
%	Fig
%
\begin{figure}[!t] 
	\centering 
	\includegraphics[width=\columnwidth]
		{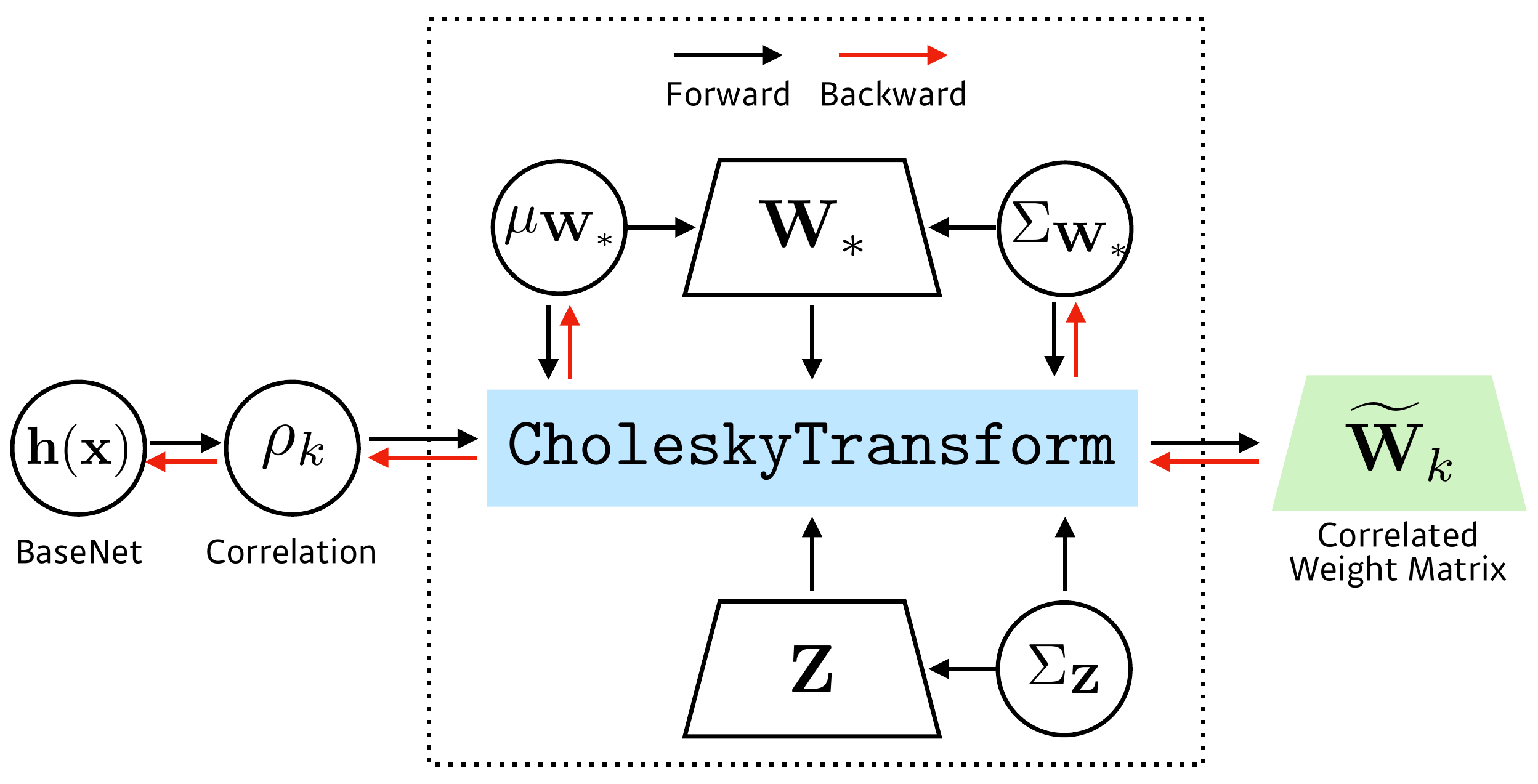}
	\caption{
	    Illustration of a \emph{Cholesky Block}. Every block shares target weight matrix $\mathbf{W}_{*}$ and auxiliary matrix $\mathbf{Z}$, and outputs correlated weight matrix $\widetilde{\mathbf{W}}_{k}$ through \cmtt{CholeskyTransform} (see \eqref{eq:cholesky}) to distinguish the abnormal pattern from normal one which will be learned by $\mathbf{W}_{*}$. 
		}
	\label{fig:cholesky_block}
\end{figure}

%
% Overall Mechanism of ChoiceNet
%
\subsection{Overall Mechanism of ChoiceNet}
\label{subsec:model}

%
% What is ChoiceNet?
%
In this section we describe the model architecture and
the overall mechanism of ChoiceNet.
In the following,
$\tau^{-1} >0$ is a small constant indicating the expected variance
of the target distriubtion.
$\mathbf{W}_{\mathbf{h}\to\boldsymbol{\rho}}$, $\mathbf{W}_{\mathbf{h}\to\boldsymbol{\pi}}\in \mathbb{R}^{K\times Q}$ 
and $\mathbf{W}_{\mathbf{h}\to\boldsymbol{\Sigma_{0}}}\in\mathbb{R}^{D\times Q}$ are weight matrices 
where $Q$ and $D$ denote the dimensions of a feature vector $\mathbf{h}$
and output $\mathbf{y}$, respectively, and $K$ is the number of mixtures\footnote{
Ablation studies regarding changing $K$ and $\tau^{-1}$
are shown in the supplementary material where the results show that
these hyper-parameters
are not sensitive to the overall learning results.}.

ChoiceNet is a twofold architecture: 
(a) an arbitrary base network and 
(b) the \emph{Cholesky Block} (see Figure \ref{fig:cholesky_block}).
Once the base network extracts features,
The \emph{Cholesky Block} estimates the mixture of
the target distribution and 
other correlated distributions using the Cholesky transform 
in \eqref{eq:cholesky}. 
While it may seem to resemble a mixture density network (MDN)
\cite{bishop1994mixture},
this ability to model dependencies between mixtures lets us to 
effectively infer and distinguish the target distributions 
from other noisy distributions as will be shown in the experimental sections.
When using the MDN, particularly, it is not straightforward to select which
mixture to use other than using the one with the largest 
mixture probability which may lead to inferior performance
given noisy datasets\footnote{
We test the MDN in both regression and classification tasks
and the MDN shows poor performances.}.

%Then the Cholesky block estimates the densities of 
%the output data generating distributions
%with a correlated Gaussian mixture model, i.e., 
%$(\boldsymbol{\mu}_{k}, \boldsymbol{\Sigma}_{k}, \boldsymbol{\pi}_{k})_{k=1}^{K}$.
%Contrary to the mixture density network (MDN), during the density estimation process,
%the Cholesky block samples correlated weights $\{\widetilde{\mathbf{W}}_{k}\}_{k=1}^{K}$
%using Cholesky transform.
%Consequently, the Cholesky block is able to model 
%the correlated outputs i.e. correlated mean vectors $\boldsymbol{\mu}$.
%This dependencies between output mixtures let us to 
%effectively estimate the target distribution under noisy data. 

Let us elaborate on the overall mechanism of ChoiceNet.
Given an input $\mathbf{x}$, a feature vector 
$\mathbf{h} \in \mathbb{R}^Q$ is computed
from any arbitrary mapping such as ResNet \cite{He_16}
for images or word embedding \cite{Levy_14} for natural languages.
Then the network computes $K-1$ correlations,
$\{\rho_1,\rho_2(\mathbf{h}),...,\rho_K(\mathbf{h})\}$, for $K$ mixtures
where the first $\rho_1=1$ is reserved to model the target distribution,
In other words, the first mixture becomes our target distribution
and we use the predictions from the first mixture in the inference phase.

The mean and variance of the weight matrix,
$\mu_{\mathbf{W}} \in \mathbb{R}^{Q \times D}$ and 
$\Sigma_{\mathbf{W}} \in \mathbb{R}^{Q \times D}$,
are defined and updated for modeling correlated output distributions
which is analogous to a Bayesian neural network \cite{Blundell_15},
These matrices can be back-propagated using the reparametrization trick. 
The \emph{Cholesky Block} also computes 
the base output variance $\Sigma_0(\mathbf{h})$ similar to an MDN.

Then we sample $K$ weight matrices $\{ \widetilde{\mathbf{W}}_i \}_{i=1}^K$
from $\{\mu_{*},\Sigma_{*}\}$
and $\{\rho_1,\rho_2(\mathbf{h}),...,\rho_K(\mathbf{h})\}$
using the Cholesky transform \eqref{eq:cholesky}
so that the correlations between $\widetilde{\mathbf{W}}_i$ and $\mathbf{W}_{*}$
becomes $\rho_i(\cdot)$.
Note that the correlation is preserved through an affine transform. 
The $K$ sampled feedforward weights,
$\{ \widetilde{\mathbf{W}}_i \}_{i=1}^K$, are used to compute
$K$ correlated output mean vectors, $\{ \mu_i \}_{i=1}^K$.
Note that the correlation between $\mu_1$ and $\mu_i$ 
also becomes $\rho_i(\cdot)$. 
We would like to emphasize that,
as we employ Gaussian distributions in the Cholesky transform,
the influences of uninformative or independent data,
whose correlations, $\rho$, are close to $0$,
is attenuated as their variances increase
\cite{kendall2017uncertainties}.

%%%%%%%%%%%%%%%%%%% Algorithm %%%%%%%%%%%%%%%%%%%%
\SetKwInOut{Input}{Input}
\SetKwInOut{Output}{Output}
\begin{algorithm}[t]
% \footnotesize
\Input{$\mathcal{D}_{\text{train}}$, $K$, $\tau$, $\lambda$, $\mathbf{h}:\mathcal{X}\to\mathbb{R}^{Q}$}

Initialize $\mu_{*}, \Sigma_{*}, \Sigma_{\mathbf{Z}} \in\mathbb{R}^{Q\times D}$

$\qquad\qquad\mathbf{W}_{\mathbf{h}\to\boldsymbol{\rho}}, \mathbf{W}_{\mathbf{h}\to\boldsymbol{\pi}},\mathbf{W}_{\mathbf{h}\to\Sigma_{0}} \in\mathbb{R}^{K\times Q}$

\While{True}{
Sample $\mathbf{W}_{*}\sim\mathcal{N}(\mu_{*},\Sigma_{*})$, $\mathbf{Z}\sim\mathcal{N}(\boldsymbol{0},\Sigma_{\mathbf{Z}})$

\For{$k \in \{1,\ldots ,K\}$}{
    $\rho_{k} = \tanh(\mathbf{W}_{\mathbf{h}\to\boldsymbol{\rho}}\mathbf{h})_{k}$ $\qquad(\rho_{1}=1)$ \\
    $\pi_{k} = \text{softmax}(\mathbf{W}_{\mathbf{h}\to\boldsymbol{\pi}}\mathbf{h})_{k}$ \\
    $(\Sigma_{0})_{k}=\exp(\mathbf{W}_{\mathbf{h}\to\Sigma_{0}}\mathbf{h})_{k}$ \\ 
    $\Sigma_{k} = (1-\rho_{k}^{2})(\Sigma_{0})_{k} + \tau^{-1}$ \\
    $\widetilde{\mathbf{W}}_{k}$ = \cmtt{Cholesky}$(\mathbf{W}_{*}, \mathbf{Z},\rho_{k},\mu_{*},\Sigma_{*},\Sigma_{\mathbf{Z}})$ \\
    $\mu_{k} = \widetilde{\mathbf{W}}_{k}\mathbf{h}$
}

Compute $\mathcal{L}(\mathcal{D}_{\text{train}}|(\pi_{k},\mu_{k},\Sigma_{k})_{k=1}^{K})$ \\

Update $\mathbf{h}, \mathbf{W}_{\mathbf{h}\to\boldsymbol{\rho}}, \mathbf{W}_{\mathbf{h}\to\boldsymbol{\pi}},\mathbf{W}_{\mathbf{h}\to\Sigma_{0}},\mu_{*},\Sigma_{*}$

}
\Return{$\mathbf{W}_{*}, \mathbf{h}$}
\caption{ChoiceNet Algorithm}
\label{alg:choice-net}
\end{algorithm}
%%%%%%%%%%%%%%%%%%% Algorithm %%%%%%%%%%%%%%%%%%%% 

The output variances of $k$-th mixture is computed from
$\rho_k(\mathbf{h})$ and the based output variance $\Sigma_0(\mathbf{h})$
as follows:
\begin{equation}
	\Sigma_k = (1-\rho^2_k(\mathbf{h})) \Sigma_0(\mathbf{h}) + \tau^{-1} 
	\in \mathbb{R}^D
	\label{eqn:sig_k}
\end{equation}
This has an effect of increasing the variance of the mixture
which is less related (in terms of the absolute value of a correlation)
with the target distribution.
The \emph{Cholesky Block} also computes the mixture probabilities
of $k$-th mixture, $\pi_k(\mathbf{h})$, akin to an MDN.
The overall process of the \emph{Cholesky Block}
is summarized in Algorithm \ref{alg:choice-net}
and Figure \ref{fig:overall}.
Now, let us introduce the loss functions for 
regression and classification tasks.

% \begin{align*}
% \mathbf{h} 
% &= \text{\rm{BaseNet}}(\mathbf{x}) \in \mathbb{R}^{Q}
% \\
% \boldsymbol{\rho}(\mathbf{h}) 
% &=\eta(\mathbf{W}_{\mathbf{h}\to\boldsymbol{\rho}}\mathbf{h}) \in\mathbb{R}^{K}
% = \left(\rho_{1}, \rho_{2}(\mathbf{h}), \ldots,\rho_{K}(\mathbf{h})\right)
% ~\text{where}~ \rho_1=1
% \\
% \Sigma_{0}(\mathbf{h}) 
% &= \exp(\mathbf{W}_{\mathbf{h}\to\boldsymbol{\Sigma}_{0}}\mathbf{h}) \in \mathbb{R}^{D} 
% \\
% \Sigma_{k}
% &=(1-\rho_{k}^{2})\Sigma_{0}(\mathbf{h})+\tau^{-1} \in\mathbb{R}^{D},
%  k\in \{1,\ldots, K \}
% \\
% \mathbf{W}
% &\sim \mathcal{N}(\mu_{\mathbf{W}}, \Sigma_{\mathbf{W}}) \in \mathbb{R}^{D\times Q} 
% \\
% \mathbf{Z}
% &\sim \mathcal{N}(\boldsymbol{0},\Sigma_{\mathbf{Z}}) \in \mathbb{R}^{D \times Q} 
% \\
% \widetilde{\mathbf{W}}_{k} 
% &= \mathscr{T}_{(\rho_{k}(\mathbf{h}),\mu_{\mathbf{W}},\Sigma_{\mathbf{W}},\Sigma_{\mathbf{Z}})}(\mathbf{W},\mathbf{Z}) \in \mathbb{R}^{D \times Q},
%  k\in\{1,\ldots, K\}
% \\
% \boldsymbol{\mu} 
% &= (\mu_{1},\ldots,\mu_{K}) = (\widetilde{\mathbf{W}}_{1}\mathbf{h} ,\ldots,\widetilde{\mathbf{W}}_{K}\mathbf{h}) %\in \mathbb{R}^{K \times D} 
% \\
% \boldsymbol{\pi} 
% &= (\pi_{1},\ldots,\pi_{K})=\text{\rm{softmax}}(\mathbf{W}_{\mathbf{h}\to\boldsymbol{\pi}}\mathbf{h}) %\in \mathbb{R}^{K }  
% \\
% \boldsymbol{\Sigma} 
% &= (\Sigma_{1},\ldots,\Sigma_{K})%\in\mathbb{R}^{K\times D}
% \end{align*}

\paragraph{Regression}

For the regression task, we employ both $L_{2}$-loss 
and the standard MDN loss
(\cite{bishop1994mixture,
choi2017uncertainty, christopher2016pattern})
\begin{equation}
\begin{aligned}
    \label{loss-reg}
    &\mathcal{L}(\mathcal{D}) 
    = \frac{1}{N}\sum_{i=1}^{N} 
        \lambda_1
        \|\mathbf{y}_{i} 
        - \mu_{1}(\mathbf{x}_{i})\|_{2}^{2} \\
         &+ \frac{1}{N}\sum_{i=1}^{N} 
        \lambda_2
        \log\left(\sum_{k=1}^{K}\pi_{k}(\mathbf{x}_{i})
        \mathcal{N}(\mathbf{y}_{i};\mu_{k}(\mathbf{x}_{i}),
        \Sigma_{k}(\mathbf{x}_{i})) \right)
\end{aligned}
\end{equation}
where $\lambda_1$ and $\lambda_2$ are hyper-parameters and $\mathcal{N}(\cdot | \mu, \Sigma)$ is the density of multivariate Gaussian:
\begin{align*}
\mathcal{N}(\mathbf{y}_{i};\mu_{k}, \Sigma_{k})
= \prod_{d=1}^{D}\frac{1}{\sqrt{2\pi \Sigma_{k}^{(d)}}}\exp\left(-\frac{| y_{i}^{(d)} - \mu_{k}^{(d)}|^{2}}{2 \Sigma_{k}^{(d)}}\right)
\end{align*}

%
% Figure
%
\begin{figure}[!t] 
	\centering 
	\includegraphics[width=\columnwidth]
		{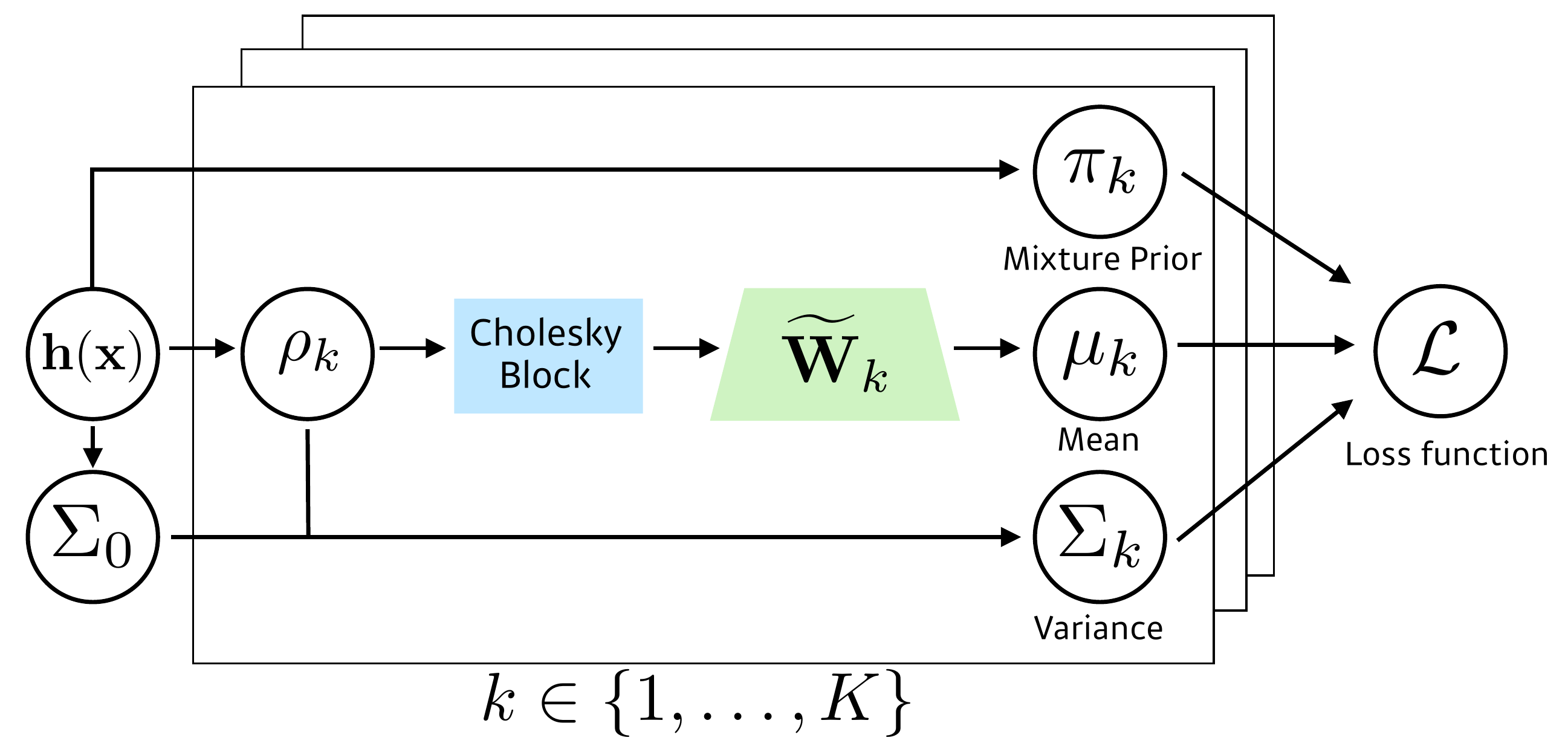}
	\caption{
	    Overall mechanism of ChoiceNet. It consists of $K$ mixtures and each mixture outputs triplet $(\pi_{k},\mu_{k},\Sigma_{k})$ via Algorithm \ref{alg:choice-net}. $\rho_{1}=1$ is reserved to model the target distribution. 
	   % Weight matrices $\mathbf{W}_{*}$ and $\mathbf{Z}$ are sampled by $\mathcal{N}(\mu_{\mathbf{W}_{*}},\Sigma_{\mathbf{W}_{*}})$ and $\mathcal{N}(\mathbf{0},\Sigma_{\mathbf{Z}})$, respectively.
		}
	\label{fig:overall}
\end{figure}

We also propose the following Kullback-Leibler regularizer:
\begin{align}
    \label{KL-reg}
    \mathbb{KL}(\bar{\boldsymbol{\rho}} \| \boldsymbol{\pi} ) 
    = \sum_{k=1}^{K} \bar{\boldsymbol{\rho}}_{k}
    \log\frac{\bar{\boldsymbol{\rho}}_{k}}{\pi_{k}},
    \quad \bar{\boldsymbol{\rho}} 
    = \text{softmax}(\boldsymbol{\rho})
\end{align}
The above KL regularizer encourages the mixture components 
with the strong correlations to have high mixture probabilities.
Note that we use the notion of correlation to evaluate 
the goodness of each training data 
where the first mixture whose correlation
is always $1$ is reserved for modeling the target distribution.
% We assume that the majority of training data are not corrupted.
% Thus, most of training data are positively correlated.
% In this regards, KL regularizer effectively reflects our assumption.
% This guidance is useful since ChoiceNet uses
% the mean vector $\mu_{1}(\mathbf{x}_{i})$
% of the first mixture component at
% the inference stage.

%
% Figure
%
\begin{figure*}[!t] 
	\centering 
	\subfigure[]{\includegraphics[width=.21\textwidth]
		{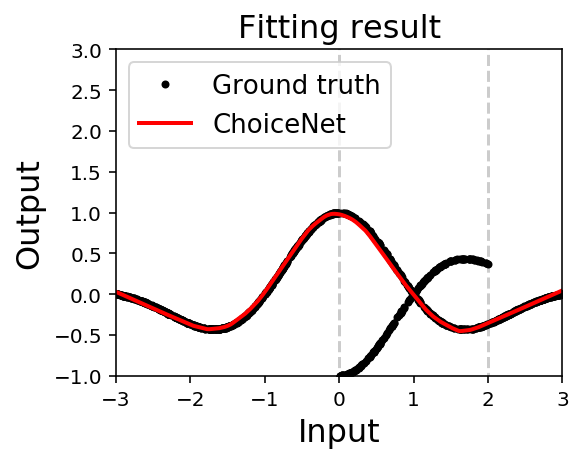}
		\label{fig:corr_check_a}}
	\subfigure[]{\includegraphics[width=.21\textwidth]
		{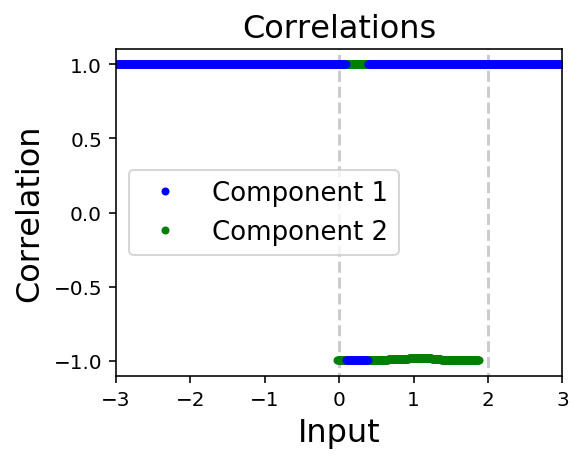}
		\label{fig:corr_check_b}}
	\subfigure[]{\includegraphics[width=.21\textwidth]
		{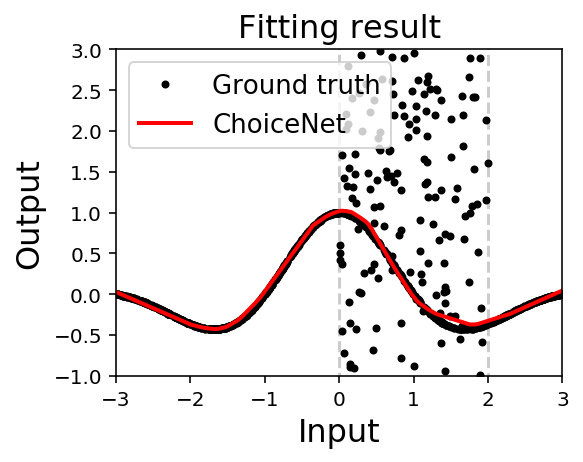}
		\label{fig:corr_check_c}}
	\subfigure[]{\includegraphics[width=.21\textwidth]
		{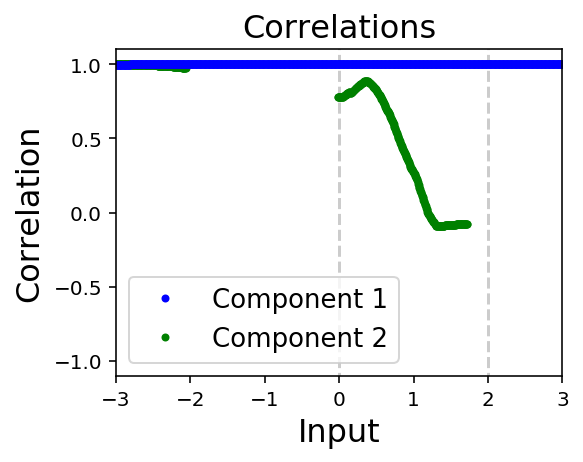}
		\label{fig:corr_check_d}}
	\caption{
	    Fitting results on datasets with 
	    (a) flipped function
	    and 
	    (c) uniform corruptions.
	    Resulting correlations of two components
	    with 
	    (b) flipped function
	    and 
	    (d) uniform corruptions.
		}
	\label{fig:corr_check}
\end{figure*}

%
% Classification
%
\paragraph{Classification}

In the classification task, we suppose 
each $\mathbf{y}_{i}$ is a $D$-dimensional one-hot vector.
Unlike the regression task,
\eqref{loss-reg} is not appropriate for the classification task. 
We employ the following loss function:
\begin{align}
    \label{loss-cl-1}
    \mathcal{L}(\mathcal{D}) =
    -\frac{1}{N}\sum_{i=1}^{N} \sum_{k=1}^{K}\pi_{k}(\mathbf{x}_{i}) 
    	l(\mathbf{x}_i,\mathbf{y}_i)
\end{align}
where    
\begin{align*}
	l(\mathbf{x}_i,\mathbf{y}_i) & = 
    	\langle \text{softmax}(\hat{\mathbf{y}}_{k}(\mathbf{x}_{i})), \mathbf{y}_{i}\rangle \\ 
    	& \qquad -
	\lambda_\text{reg} \log\left(\sum_{d=1}^{D} \exp (\hat{y}_{k}^{(d)}(\mathbf{x}_{i})) 
	\right).
\end{align*}
Here $\langle \cdot,\cdot \rangle$ denotes inner product, 
% for $k\in \{ 1, \ldots, K \}, d\in\{1,\ldots, D\}$
$\hat{\mathbf{y}}_{k} = (\hat{y}_{k}^{(1)},\ldots,\hat{y}_{k}^{(D)})$, and
$\hat{y}_{k}^{(d)}(\mathbf{x}_{i}) = \mu_{k}^{(d)} + \sqrt{\Sigma_{k}^{(d)}} \varepsilon$
where $\varepsilon\sim\mathcal{N}(0,1)$.
Similar loss function was used in \cite{Kendall_17_unct} which also 
utilizes a Gaussian mixture model. 
% Similar to the regression task, we use both \eqref{KL-reg} and weight decay. 

%
% Illustrative Synthetic Examples
%
\subsection{Illustrative Synthetic Examples}
\label{subsec:ill}

Here, we provide synthetic examples in both regression
and classification to illustrate 
how the proposed method
can be used to robustly learn the underlying target distribution 
given a noisy training dataset.

%
% Regression Task
%
\paragraph{Regression Task}
\label{subsec:toy_reg}

We first focus on
how the proposed method can distinguish between
the target distribution and noise distributions
in a regression task and 
show empirical evidence that our method can
achieve this by estimating 
the target distribution and the correlation 
of noise distribution simultaneously. 
We train on two datasets
with the same target function but with 
different types of corruptions by
replacing $50\%$ of the output values 
whose input values are within $0$ to $2$:
one uniformly sampled from $-1$ to $3$
and the other from a flipped target function
as shown in Figure \ref{fig:corr_check_a}
and \ref{fig:corr_check_c}.
Throughout this experiment, we set $K = 2$ 
for better visualization.

As shown in Figure \ref{fig:corr_check},
ChoiceNet successfully estimates the target function
with partial corruption. 
To further understand how ChoiceNet works, 
we also plot the correlation of each mixture
at each input with different colors. 
When using the first dataset where we flip the outputs 
whose inputs are between 
$0$ and $2$, the correlations of the second 
mixture at the corrupt region becomes $-1$
(see Figure \ref{fig:corr_check_b}).
This is exactly what we wanted ChoiceNet to behave
in that having $-1$ correlation will simply flip the output.
In other words, the second mixture {\it takes care of}
the noisy (flipped) data by assigning $-1$ correlation
while the first mixture component reserved for
the target distribution is less affected by the noisy training data.

When using the second dataset,
the correlations of the second mixture at the corrupt region
are not $-1$ but decreases as the input increase from $0$
to $2$ (see Figure \ref{fig:corr_check_d}).
Intuitively speaking, this is because
the average deviations between the noisy output
and the target output increases as the input increases.
Since decreasing the correlation from $1$ to $0$ will
increase the output variance as shown in (\ref{eqn:sig_k}),
the correlation of the second mixture tends to decrease
as the input increase between $0$ to $2$. 
This clearly shows the capability of
ChoiceNet to distinguish the target distribution
from noisy distributions.

%
% Figure
%
\begin{figure}[!t] 
	\centering 
	\subfigure[]{\includegraphics[width=.21\textwidth]
		{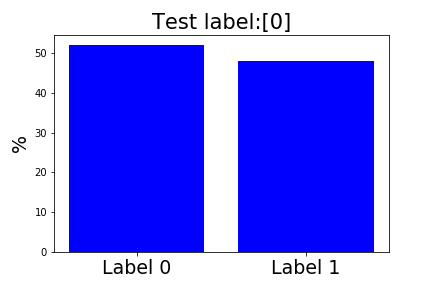}
		\label{fig:hist_check_a}}
	\subfigure[]{\includegraphics[width=.21\textwidth]
		{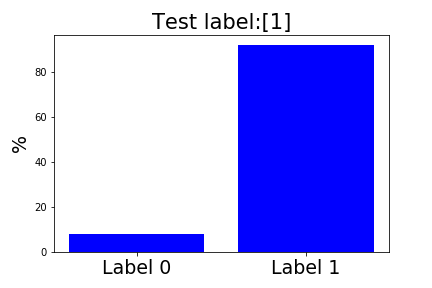}
		\label{fig:hist_check_b}}
	\caption{
	    The predictions results of the second mixture
	    of test inputs whose labels are
	    (a) $0$ and (b) $1$, respectively.
		}
	\label{fig:hist_check}
\end{figure}

%
% Binary Classification Task
%
\paragraph{Binary Classification Task}
\label{subsec:toy_cls}

We also provide an illustrative binary classification task
using label $0$ and $1$ from the MNIST dataset.
In particular, we replaced $40\%$ of the training data
with label $0$ to $1$. 
To implement ChoiceNet, 
we use two-layer convolutional neural networks with $64$ channels
and two mixtures.
We trained ChoiceNet for $10$ epochs where the final train and test accuracies
are $81.7\%$ and $98.1\%$, respectively, which indicates ChoiceNet
successfully infers clean data distribution.
As the first mixture is deserved for inferring the clean target distribution,
we expect the second mixture to take away corrupted labels. 
Figure \ref{fig:hist_check} shows two prediction results of the second mixture
when given test inputs with label $0$ and $1$. 
As $40\%$ of training labels whose labels are originally $0$ are replaced to $1$,
almost half of the second mixture predictions of test images whose labels are $0$ 
are $1$ indicating that it takes away corrupted labels while training. 
On the contrary, the second mixture predictions of test inputs whose original labels are $1$
are mostly focusing on label $1$.

%
%	Fig
%

\begin{figure*}[!t] 
	\centering 
	\subfigure[]{\includegraphics[width=.28\textwidth]
		{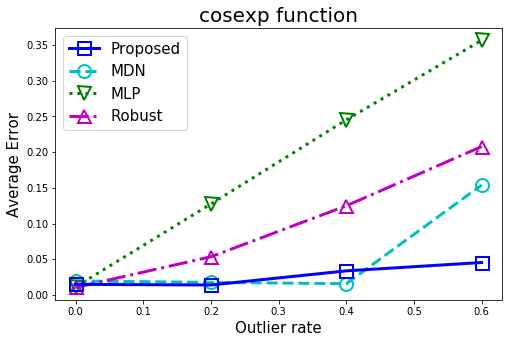}
		\label{fig:syn_ref_a}}
	\subfigure[]{\includegraphics[width=.28\textwidth]
		{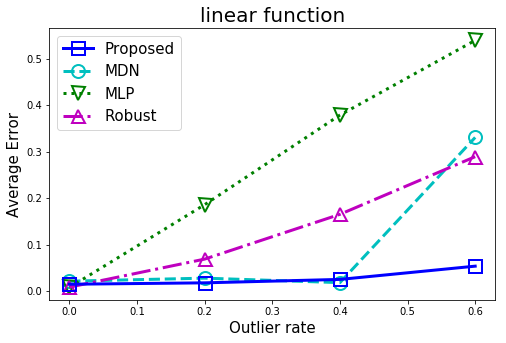}
		\label{fig:syn_ref_b}}
	\subfigure[]{\includegraphics[width=.28\textwidth]
		{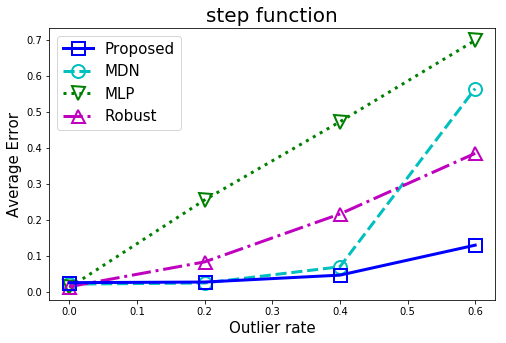}
		\label{fig:syn_ref_c}}
	\subfigure[]{\includegraphics[width=.28\textwidth]
		{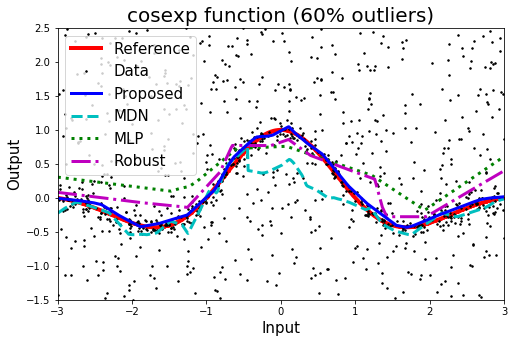}
		\label{fig:syn_ref_d}}
	\subfigure[]{\includegraphics[width=.28\textwidth]
		{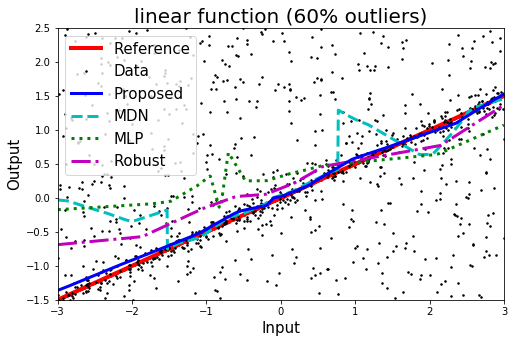}
		\label{fig:syn_ref_e}}
	\subfigure[]{\includegraphics[width=.28\textwidth]
		{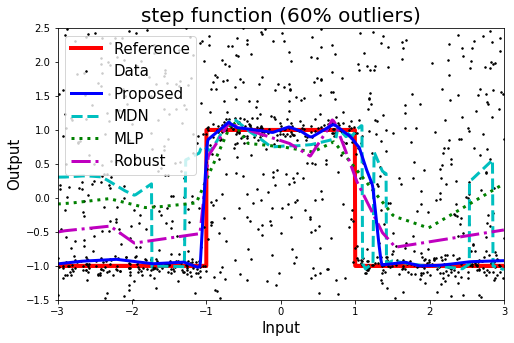}
		\label{fig:syn_ref_f}}
	\caption{
	    (a-c) Average fitting errors while varying the outlier rates
	    and 
	    (e-f) fitting results of the compared methods with $60\%$ outliers
	    using \emph{cosexp}, \emph{linear}, and
	    \emph{step} functions.
		}
	\label{fig:syn_ref}
\end{figure*}

%
% Experiments
%
\section{Experiments}
\label{sec:exp}

In this section, we validate the performance of ChoiceNet
on both regression problems in Section \ref{subsec:reg}
and classification problems in Section \ref{subsec:cls}
where we mainly focus on evaluating the robustness
of the proposed method.

%
% Regression Tasks
%
\subsection{Regression Tasks} 
\label{subsec:reg}
Here, we show the regression performances using 
synthetic regression tasks and 
a Boston housing dataset with outliers.
More experimental results using synthetic data and
behavior cloning scenarios in MuJoCo environments
where the demonstrations are collected and mixed from 
both expert and adversarial policies
as well as detailed experimental settings 
can be found in the supplementary material.

%
% Synthetic
%
\paragraph{Synthetic Data}

In order to evaluate the robustness of the proposed method,
we use three different $1$-D target functions.
Particularly, we use the following target functions:
\emph{cosexp}, \emph{linear}, and \emph{step} functions
as shown in Figure \ref{fig:syn_ref_d}-\ref{fig:syn_ref_f},
respectively, and collected $1,000$ points per each function
while replacing a certain portion of outputs to random values
uniformly sampled between $-1.5$ and $2.5$.
We compared our proposed method with
a mixture density network (MDN)
\cite{bishop1994mixture}
and fully connected layers with an $L_2$-loss (MLP)
and a robust loss (Robust) proposed in
\cite{Belagiannis_15_RobustReg}.
We use three-layers with $64$ units and ReLU activations,
and for both the proposed method and an MDN, 
five mixtures are used.

The average absolute fitting errors of three different functions
are shown in Figure \ref{fig:syn_ref_a}-\ref{fig:syn_ref_c},
respectively where we can see that the proposed method
outperforms or show comparable results with low outlier rates
and shows superior performances with a high outlier rate ($>50\%$).
Figure \ref{fig:syn_ref_d}-\ref{fig:syn_ref_f} show the fitting
results along with training data. 
While our proposed method is built on top of an MDN,
it is worthwhile noting the severe performance degradation
of an MDN with an extreme noise level ($60\%$).
It is mainly because an MDN fails to allocate 
high mixture probability on the mixture corresponds to 
the target distribution as there are no dependencies among
different mixtures.

%
% Boston Housing
%
\paragraph{Boston Housing Dataset}
\label{subsubsec:bh}
A Boston housing price dataset is used
to check the robustness of the proposed method.
We compare our method with standard feedforward networks using 
four different loss functions: 
standard $L2$-loss, $L1$-loss which is known to be robust to outliers, 
a robust loss (RL) function proposed in
\cite{Belagiannis_15_RobustReg}, and a leaky robust loss (LeakyRL) function
where the results are shown in Table \ref{tbl:boston}. 
Implementation details can be found in the supplementary material. 

%
% Table
% 
\begin{table}[t!]  \center 
\small
\caption{The RMSEs of compared methods
	on the Boston Housing Dataset}
\label{tbl:boston}
\begin{tabular}{ l | p{1.0cm} p{0.5cm} p{0.5cm} p{0.5cm} p{0.9cm} p{0.9cm} p{0.9cm} } 
    \toprule
    Outliers & ChoiceNet & $L2$
                    & $L1$ & RL & LeakyRL &  MDN \\
	\midrule
	\textit{$0\%$}		
	& $3.29$  	& \bf{3.22}
	& $3.26$ 	& $4.28$  & $3.36$ & $3.46$
	\\ 
	\textit{$10\%$}		
	& \bf{3.99} 	 	& $5.97$
	& $5.72$ 	& $6.36$ 	& $5.71$ & $6.5$
	\\ 
	\textit{$20\%$}		
	& \bf{4.77} 	 	& $7.51$
	& $7.16$ 	& $8.08$	& $7.08$ & $8.62$
	\\ 
	\textit{$30\%$}		
	& \bf{5.94} 	 	& $9.04$
	& $8.65$ 	& $10.54$	& $8.67$ & $8.97$
	\\ 
	\textit{$40\%$}		
	& \bf{6.80} 		& $9.88$
	& $9.69$	& $10.94$	& $9.68$ & $10.44$
	\\
	\bottomrule
\end{tabular}
\end{table}

%
% Table
%
\begin{table}[t]
	\small
	\caption{
		Test accuracies on the CIFAR-10 dataset with by symmetric and asymmetric noises. 
% 		The numbers of other methods are copied from \cite{Han_18}.
	}
    \label{tab:cifar2}
    \centering
	\begin{tabular}{ l  l  l  l  }
	\toprule
      		       & Pair-$45\%$ & Sym-$50\%$ & Sym-$20\%$ \\ %\hline
      		       \midrule
    Standard     & 49.5        & 48.87      & 76.25      \\ 
	+ ChoiceNet    & 70.3        & \bf{85.2}       & \bf{91.0}       \\ 
	\midrule
	MentorNet    & 58.14       & 71.10      & 80.76      \\ 
	Co-teaching  & \bf{72.62}       & 74.02      & 82.32      \\ 
	F-correction & 6.61        & 59.83      & 59.83      \\ 
	MDN          & 51.4        & 58.6      & 81.4      \\
	\bottomrule
\end{tabular}
\end{table}

%
% Classification experiments
%

%%%%%%%%%%%%%%%%%%%% Table %%%%%%%%%%%%%%%%%%%%
\begin{table*}[t!] \center
\small
\begin{tabular}{c c | c c | c c | c c | c c | c c }
\toprule
  & & \multicolumn{2}{c|}{MNIST} & \multicolumn{2}{c|}{FMNIST} & \multicolumn{2}{c|}{SVHN} & \multicolumn{2}{c|}{CIFAR10} & \multicolumn{2}{c}{CIFAR100} \\
 Data & Model & 20\% & 50\% & 20\% & 50\% & 20\% & 50\% & 20\% & 50\% & 20\% & 50\%  
\tabularnewline
\midrule
 \multirow{2}{*}{$D_{\text{test}}$} & WideResNet & 82.19 & 56.63 & 82.32 & 56.55 & 82.21 & 55.94 & 81.93 & 54.97 & 80.19 & 50.48 \\
 & + ChoiceNet & 99.66 & 99.03 & 97.46 & 95.36 & 94.40 & 78.32 & 96.58 & 90.28 & 85.81 & 68.89 \\
 \midrule
 \multirow{2}{*}{$D_{\text{train}}$}  & WideResNet & 99.29 & 92.49 & 98.62 & 92.42 & 99.34 & 95.91 & 99.97 & 99.82 & 99.96 & 99.91 \\
 & + ChoiceNet & 81.99 & 55.38 & 80.63 & 54.45 & 78.99 & 48.69 & 81.88 & 56.83 & 26.03 & 18.14 \\
\midrule
 \multicolumn{2}{c|}{Expected True Ratio} & 82\% & 55\% & 82\% & 55\% & 82\% & 55\% & 82 \% & 55\% & 80.2\% & 50.5 \% 
\tabularnewline
\bottomrule
\end{tabular}
\caption{The comparison between naive WideResNet and ChoiceNet on multile benchmark datasets.}  
\label{table_cls_valid}
\end{table*}
%%%%%%%%%%%%%%%%%%%% Table %%%%%%%%%%%%%%%%%%%%

\subsection{Classification Tasks}
\label{subsec:cls}

%
% Table CIFAR setting 1
%

% \begin{table}[t]
%     \small
%     \caption{
%     Test accuracies on the CIFAR-10 datasets
%     with symmetric noises. 
%     }
%     \label{tab:cifar}
%     \centering
%     %\bigskip 
%     \begin{tabular}{ l l l }
%         \toprule
%          & $50\%$ & $80\%$  \\
%         \midrule
%         ConvNet   & 59.3 & 27.4 \\
%         ConvNet+CN  & 84.6 & 65.2 \\
%         ConvNet+Mixup  & 83.1 & 62.9 \\
%         ConvNet+Mixup+CN   & \textbf{87.9} & \textbf{75.4}  \\
%         MentorNet  & 49.0 & 21.4 \\
%         VAT  & 71.6 & 16.9 \\
%         MDN &  58.6 & 22.7 \\
%         \bottomrule
%     \end{tabular}
% \end{table}

% ==================================

Here, we conduct comprehensive classification experiments
to investigate how ChoiceNet performs on image classification tasks
with noisy labels.
More experimental results on MNIST, CIFAR-10, and
a natural language processing task
and ablation studies of hyper-parameters
can also be found in the supplementary material. 

% Here, we conduct comprehensive classification experiments
% using several image datasets including CIFAR-10 and Large Movie Review datasets.
% More experimental results on MNIST and CIFAR-10
% and ablation studies of hyper-parameters
% can also be found in the supplementary material. 

\paragraph{CIFAR-10 Comparison}

We first evaluate the performance of our method
and compare it with MentorNet \cite{Jiang_17_mentornet},
Co-teaching \cite{Han_18} and F-correction \cite{Patrini_17_LossCorrection} on noisy CIFAR-10 datasets.
We follow three different noise settings from \cite{Han_18}: PairFlip-$45\%$, Symmetry-$50\%$, and Symmetry-$20\%$. On `Symmetry` settings, noisy labels can be assigned to all classes, while, on `PairFlip` settings, all noisy labels from the same true label are assigned to a single noisy label. A model is trained on a noisy dataset and evaluated on the clean test set.
For fair comparisons, we keep other configurations such as the network topology to be the same as \cite{Han_18}.

Table \ref{tab:cifar2} shows the test accuracy of compared methods under different noise settings. Our proposed method outperforms all compared methods on the symmetric noise settings with a large margin over 10\%p. On asymmetric noise settings (Pair-$45\%$), our method shows the second best performance, and this reveals the weakness of the proposed method. As Pair-$45\%$ assigns $45\%$ of each label to its next label, 
The \emph{Cholesky Block} fails to infer the dominant label distributions
correctly.
However, we would like to note that Co-teaching \cite{Han_18}
is complementary to our method
where one can combine these two methods by using two ChoiceNets 
and update each network using Co-teaching.

\paragraph{Generalization to Other Datasets}

We conduct additional experiments to investigate 
how our method works on other image datasets.
We adopt the structure of WideResNet \cite{Zagoruyko2016WRN} 
and design a baseline network with a depth of $22$ 
and a widen factor of $4$.
We also construct ChoiceNet by replacing the last two layers 
(`average pooling` and `linear`) with the \emph{Cholesky Block}. 
We train the networks on CIFAR-10, CIFAR-100, FMNIST, MNIST and SVHN datasets
with noisy labels. 
We generate noisy datasets with symmetric noise setting and 
vary corruption ratios from $20\%$ to $50\%$. 
We would like to emphasize that we use the same hyper-parameters, 
which are tuned for the CIFAR-10 experiments, 
for all the datasets except for CIFAR-100\footnote{
On CIFAR-100 dataset, we found the network underfits with our default
hyper-parameters. 
Therefore, we enlarged $\lambda_{reg}$ to $1e-3$ for CIFAR-100 experiments.}.

Table \ref{table_cls_valid} shows test accuracies of our method
and the baseline on various image datasets with noisy labels.
ChoiceNet consistently outperforms the baseline in most of the configurations
except for $80\%$-corrupted SVHN dataset. 
Moreover, we expect that performance gains can increase when 
dataset-specific hyperparameter tuning is applied. 
The results suggest that the proposed ChoiceNet 
can easily be applied to other noisy datasets 
and show a performance improvement without large efforts.

We would like to emphasize that the training accuracy
of the proposed method in Table \ref{table_cls_valid}
is close to the expected true ratio\footnote{
Since a noisy label can be assigned to any labels,
we can expect $82\%$, $55\%$ true labels on noisy datasets 
with a corruption probability of $20\%$, $50\%$, respectively.}. 
This implies that our proposed \emph{Cholesky Block} can separate true labels 
and false labels from a noisy dataset.
We also note that training ChoiceNet on CIFAR-100 datasets
requires a modification in a loss weight.
We hypothesize that the hyperparameters of our proposed method 
are sensitive to a number of target classes in a dataset.

\section{Conclusion}

In this paper, we have presented ChoiceNet
that can robustly learn a target distribution
given noisy training data.
The keystone of ChoiceNet is the 
mixture of correlated densities network
block which can simultaneously estimate 
the underlying target distribution and 
the quality of each data where the quality
is defined by the correlation between
the target and generating distributions.
We have demonstrated that the proposed method can robustly infer
the target distribution
on corrupt training data in both regression and classification tasks.
However, we have seen that in the case of extreme asymmetric noises,
the proposed method showed suboptimal performances.
We believe that it could resolved by combining our method
with other robust learning methods
where we have demonstrated that ChoiceNet can 
effectively be combined with mix-up \cite{Zhang_18_mixup}. 
Furthermore, one can use ChoiceNet for active learning
by evaluating the quality of each training data using 
through the lens of correlations.
% We leave these as important questions for future work.

\medskip

\small

\newpage

\bibliographystyle{ieeepes}
\bibliography{bib}

\newpage
\onecolumn

\appendix

%
% ChoiceNet
%
%
% ChoiceNet
%
\section{Proof of Theorem in Section 3}

In this appendix, we introduce fundamental theorems which lead to
Cholesky transform for given random variables $(W, Z)$. 
We apply this transform to random matrices
$\mathbf{W}$ and $\mathbf{Z}$ which carry out weight matrices
for prediction and a supplementary role, respectively.
We also elaborate the details of 
conducted experiments with additional illustrative
figures and results.
Particularly, we show additional classification 
experiments with the MNST dataset on different 
noise configurations. 

\begin{lem}
\label{chd}
Let $W$ and $Z$ be uncorrelated random variables such that 
\begin{equation}
\begin{cases}
\mathbb{E}W=\mu_{W}, & \mathbb{V}\left(W\right)=\sigma_{W}^{2}\\
\mathbb{E}Z=0, & \mathbb{V}\left(Z\right)=\sigma_{Z}^{2}
\end{cases}\label{eq:assum_chol}
\end{equation}
For a given $-1 \leq\rho \leq 1$, set
\begin{align}
\label{tilde_Z}
\tilde{Z}=\rho\frac{\sigma_{Z}}{\sigma_{W}}(W-\mu_{W})+\sqrt{1-\rho^{2}}Z
\end{align}
Then $\mathbb{E}\tilde{Z} =0$, $\mathbb{V}(\tilde{Z}) = \sigma_{Z}^{2}$, and  $\text{\rm{Corr}} (W,\tilde{Z})=\rho$.
\end{lem}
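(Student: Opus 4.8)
The plan is to verify the three claimed quantities directly by linearity of expectation and bilinearity of covariance, using only the defining relation \eqref{tilde_Z} and the moment assumptions \eqref{eq:assum_chol}. Nothing deep is required; the statement is a bookkeeping computation, so the main ``obstacle'' is merely organizing the algebra cleanly and being careful about which cross term vanishes and why.

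First I would compute $\mathbb{E}\tilde{Z}$. Taking expectations in \eqref{tilde_Z} and using linearity, the first summand contributes $\rho\frac{\sigma_{Z}}{\sigma_{W}}\bigl(\mathbb{E}W-\mu_{W}\bigr)=0$ since $\mathbb{E}W=\mu_{W}$, and the second contributes $\sqrt{1-\rho^{2}}\,\mathbb{E}Z=0$. Hence $\mathbb{E}\tilde{Z}=0$. Next I would compute the variance. Because $\mathbb{E}\tilde{Z}=0$ we have $\mathbb{V}(\tilde{Z})=\mathbb{E}\tilde{Z}^{2}$; expanding the square of \eqref{tilde_Z} gives three terms. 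The pure $(W-\mu_{W})^{2}$ term has expectation $\rho^{2}\frac{\sigma_{Z}^{2}}{\sigma_{W}^{2}}\,\mathbb{V}(W)=\rho^{2}\sigma_{Z}^{2}$; the pure $Z^{2}$ term has expectation $(1-\rho^{2})\,\mathbb{V}(Z)=(1-\rho^{2})\sigma_{Z}^{2}$; and the cross term is a multiple of $\mathbb{E}\bigl[(W-\mu_{W})Z\bigr]=\mathrm{Cov}(W,Z)=0$ since $W$ and $Z$ are uncorrelated. Adding the two surviving terms yields $\rho^{2}\sigma_{Z}^{2}+(1-\rho^{2})\sigma_{Z}^{2}=\sigma_{Z}^{2}$, as claimed.

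Finally I would compute the correlation. Since both $W$ and $\tilde{Z}$ have the stated variances and $\tilde{Z}$ is centered, $\mathrm{Corr}(W,\tilde{Z})=\mathrm{Cov}(W,\tilde{Z})/(\sigma_{W}\sigma_{Z})$. By bilinearity, $\mathrm{Cov}(W,\tilde{Z})=\rho\frac{\sigma_{Z}}{\sigma_{W}}\,\mathrm{Cov}(W,W-\mu_{W})+\sqrt{1-\rho^{2}}\,\mathrm{Cov}(W,Z)$; the second term vanishes by the uncorrelatedness assumption, and the first equals $\rho\frac{\sigma_{Z}}{\sigma_{W}}\,\mathbb{V}(W)=\rho\,\sigma_{W}\sigma_{Z}$. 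Dividing by $\sigma_{W}\sigma_{Z}$ gives $\rho$. Assembling these three computations completes the proof; the only points requiring a word of justification are the two appeals to $\mathrm{Cov}(W,Z)=0$ and the implicit assumption $\sigma_{W},\sigma_{Z}>0$ so that the correlation is well defined.
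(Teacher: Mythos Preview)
Your proposal is correct and matches the paper's own proof essentially line for line: both verify $\mathbb{E}\tilde{Z}=0$ by linearity, compute $\mathbb{V}(\tilde{Z})$ by expanding the square and killing the cross term via $\mathrm{Cov}(W,Z)=0$, and obtain $\mathrm{Cov}(W,\tilde{Z})=\rho\sigma_{W}\sigma_{Z}$ before dividing by $\sigma_{W}\sigma_{Z}$. The only cosmetic difference is that you invoke bilinearity of covariance abstractly where the paper expands $\mathbb{E}[(W-\mu_{W})\tilde{Z}]$ by hand.
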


%%%%%%%%%%%%%%%%%%%%%% PROOF 1 %%%%%%%%%%%%%%%%%%%%%%%%%%%
\begin{proof}
Since $W$ and $Z$ are uncorrelated, we have
\begin{align}
\mathbb{E}\left[(W-\mu_{W})Z\right] = \mathbb{E}(W-\mu_{W})\mathbb{E}Z = 0 \label{eq:uncorr}
\end{align}
By \eqref{eq:assum_chol}, we directly obtain
\[
\mathbb{E}\tilde{Z}=\rho\frac{\sigma_{Z}}{\sigma_{W}}\left(\mathbb{E}W-\mu_{W}\right)+\mathbb{E}Z=0
\]
Also, by \eqref{eq:assum_chol} and \eqref{eq:uncorr},
\begin{align*}
\mathbb{V}\left(\tilde{Z}\right)=\mathbb{E}|\tilde{Z}|^{2} & =\rho^{2}\left(\frac{\sigma_{Z}}{\sigma_{W}}\right)^{2}\mathbb{V}(W)+\mathbb{V}(Z)+2\rho\frac{\sigma_{Z}}{\sigma_{W}}\underbrace{\mathbb{E}\left[(W-\mu_{W})Z\right]}_{=0}\\
 & =\rho^{2}\frac{\sigma_{Z}^{2}}{\sigma_{W}^{2}}\sigma_{W}^{2}+(1-\rho^{2})\sigma_{Z}^{2}=\sigma_{Z}^{2}
\end{align*}
Similarly,
\begin{align*}
\text{Cov}(W,\tilde{Z}) & =\mathbb{E}\left[(W-\mu_{W})\tilde{Z}\right]\\
 & =\mathbb{E}\left[(W-\mu_{W})\rho\frac{\sigma_{Z}}{\sigma_{W}}(W-\mu_{W})\right]+\underbrace{\mathbb{E}\left[(W-\mu_{W})Z\right]}_{=0}\\
 & =\rho\frac{\sigma_{Z}}{\sigma_{W}}\mathbb{V}(W)=\rho\sigma_{Z}\sigma_{W}
\end{align*}
Therefore
\[
\text{Corr}(W,\tilde{Z})=\frac{\text{Cov}(W,\tilde{Z})}{\sqrt{\mathbb{V}(W)}\sqrt{\mathbb{V}(\tilde{Z})}}=\frac{\rho\sigma_{W}\sigma_{Z}}{\sigma_{W}\sigma_{Z}}=\rho
\]
The lemma is proved.
\end{proof}
%%%%%%%%%%%%%%%%%%%%%% PROOF 1 %%%%%%%%%%%%%%%%%%%%%%%%%%%

\begin{lem}
\label{transform}
Assume the same condition in Lemma \ref{chd} and define $\tilde{Z}$ as \eqref{tilde_Z}. For given functions $\varphi : \mathbb{R}\to\mathbb{R}$ and $\psi : \mathbb{R} \to (0,\infty)$, set $\tilde{W}:=\varphi(\rho)+\psi(\rho)\tilde{Z}$. Then
\[
\mathbb{E}\tilde{W} = \varphi(\rho),\quad \mathbb{V}(\tilde{W}) = |\psi(\rho)|^{2}\sigma_{Z}^{2},\quad\text{\rm{Corr}}(W,\tilde{W})=\rho
\]
\end{lem}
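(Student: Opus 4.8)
The plan is to reduce Lemma~\ref{transform} to Lemma~\ref{chd} by treating $\tilde W = \varphi(\rho) + \psi(\rho)\tilde Z$ as a deterministic affine function of the random variable $\tilde Z$, since $\rho$ is fixed and hence $\varphi(\rho)$ and $\psi(\rho)$ are constants. Concretely, I would first invoke Lemma~\ref{chd} to record the three facts it supplies: $\mathbb{E}\tilde Z = 0$, $\mathbb{V}(\tilde Z) = \sigma_Z^2$, and $\mathrm{Corr}(W,\tilde Z) = \rho$. Everything then follows from the standard behavior of mean, variance, and correlation under an affine transformation of one coordinate.

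The key steps, in order, are: (i) compute $\mathbb{E}\tilde W = \varphi(\rho) + \psi(\rho)\,\mathbb{E}\tilde Z = \varphi(\rho)$ by linearity of expectation; (ii) compute $\mathbb{V}(\tilde W) = \psi(\rho)^2\,\mathbb{V}(\tilde Z) = |\psi(\rho)|^2\sigma_Z^2$, using that additive constants do not change the variance and multiplicative constants pull out squared; (iii) compute $\mathrm{Cov}(W,\tilde W) = \mathbb{E}\big[(W-\mu_W)(\tilde W - \varphi(\rho))\big] = \psi(\rho)\,\mathrm{Cov}(W,\tilde Z) = \psi(\rho)\,\rho\,\sigma_W\sigma_Z$ (the last equality from the covariance computation inside the proof of Lemma~\ref{chd}); and (iv) assemble the correlation
\[
\mathrm{Corr}(W,\tilde W) = \frac{\mathrm{Cov}(W,\tilde W)}{\sqrt{\mathbb{V}(W)}\sqrt{\mathbb{V}(\tilde W)}} = \frac{\psi(\rho)\,\rho\,\sigma_W\sigma_Z}{\sigma_W\cdot |\psi(\rho)|\,\sigma_Z} = \mathrm{sgn}(\psi(\rho))\,\rho = \rho,
\]
where the sign issue is harmless because the hypothesis $\psi:\mathbb{R}\to(0,\infty)$ forces $\psi(\rho) > 0$, so $|\psi(\rho)| = \psi(\rho)$ and the ratio is exactly $\rho$.

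There is essentially no hard part here; the lemma is a routine corollary of Lemma~\ref{chd} together with affine-invariance properties of the first two moments. The only point that requires a moment's care is making sure the positivity assumption on $\psi$ is actually used — without it one would only get $\mathrm{Corr}(W,\tilde W) = \pm\rho$ — so I would state explicitly in the write-up that $\psi(\rho) > 0$ is what lets us drop the absolute value when simplifying the denominator. One could alternatively avoid recomputing $\mathrm{Cov}(W,\tilde Z)$ from scratch by writing $\mathrm{Cov}(W,\tilde Z) = \mathrm{Corr}(W,\tilde Z)\sqrt{\mathbb{V}(W)\mathbb{V}(\tilde Z)} = \rho\,\sigma_W\sigma_Z$ directly from the conclusion of Lemma~\ref{chd}, which keeps the proof entirely at the level of the three stated facts and avoids reopening the earlier algebra.
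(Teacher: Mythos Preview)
Your proposal is correct and follows essentially the same route as the paper: both reduce to Lemma~\ref{chd}, compute the mean and variance of $\tilde W$ via the affine map, pull $\psi(\rho)$ out of $\mathrm{Cov}(W,\tilde W)$, and cancel in the correlation formula. Your treatment is arguably a touch more careful, since you explicitly flag that $\psi(\rho)>0$ is what justifies replacing $|\psi(\rho)|$ by $\psi(\rho)$ in the denominator, whereas the paper silently uses this.
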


%%%%%%%%%%%%%%%%%%%%%% PROOF 2 %%%%%%%%%%%%%%%%%%%%%%%%%%%
\begin{proof}
Note that
\[
\mu_{\tilde{W}}=\varphi(\rho)+\psi(\rho)\mu_{\tilde{Z}} = \varphi(\rho)
\]
\[
\sigma_{\tilde{W}}^{2}=\left|\psi(\rho)\right|^{2}\mathbb{E}\left(\tilde{Z}-\mu_{\tilde{Z}}\right)^{2}=\psi^{2}(\rho)\sigma_{Z}^{2}
\]
Therefore, by Lemma \ref{chd}
\begin{align*}
\mathbb{E}\left[(W-\mu_{W})(\tilde{W}-\mu_{\tilde{W}})\right] & =\psi(\rho)\mathbb{E}\left[(W-\mu_{W})(\tilde{Z}-\mu_{\tilde{Z}})\right]\\
 & =\rho\psi(\rho)\sigma_{W}\sigma_{Z}
\end{align*}
Hence
\[
\text{\rm{Corr}}(W, \tilde{W})=\frac{\mathbb{E}\left[(W-\mu_{W})(\tilde{W}-\mu_{\tilde{W}})\right]}{\sigma_{W}\sigma_{\tilde{W}}}=\frac{\rho\psi(\rho)\sigma_{W}\sigma_{Z}}{\psi(\rho)\sigma_{W}\sigma_{Z}}=\rho
\]
The lemma is proved.
\end{proof}
%%%%%%%%%%%%%%%%%%%%%% PROOF 2 %%%%%%%%%%%%%%%%%%%%%%%%%%%

Now we prove the aforementioned theorem in Section 3.

\begin{theorem*}

Let $\boldsymbol{\rho}=(\rho_{1},\ldots,\rho_{K})\in\mathbb{R}^{K}$.
For $p\in\{1,2\}$, random matrices $\mathbf{W}^{(p)}\in\mathbb{R}^{K\times Q}$ are given
such that for every $k\in\{1,\ldots,K\}$,
\begin{equation}
\text{\rm{Cov}}\left(W_{ki}^{(p)},W_{kj}^{(p)}\right)=\sigma_{p}^{2}\delta_{ij}, \quad \text{\rm{Cov}}\left(W_{ki}^{(1)},W_{kj}^{(2)}\right)=\rho_{k}\sigma_{1}\sigma_{2}\delta_{ij}
\label{eq:assumption1}
\end{equation}

% \begin{equation}
% \text{\rm{Cov}}\left(W_{ki}^{(p)},W_{kj}^{(p)}\right)=\sigma_{p}^{2}\delta_{ij}, \quad \text{\rm{Cov}}\left(W_{ki}^{(1)},W_{kj}^{(2)}\right)=\rho_{k}\sigma_{1}\sigma_{2}\delta_{ij}\label{eq:assumption1}
% \end{equation}
% and
% \begin{equation}
% \text{\rm{Cov}}\left(W_{ki}^{(1)},W_{kj}^{(2)}\right)=\rho_{k}\sigma_{1}\sigma_{2}\delta_{ij}\label{eq:assumption2}
% \end{equation}

Given $\mathbf{h} = (h_{1},\ldots,h_{Q}) \in\mathbb{R}^{Q}$, set $\mathbf{y}^{(p)}=\mathbf{W}^{(p)}\mathbf{h}$
for each $p\in\{1,2\}$. 
Then an elementwise
correlation between $\mathbf{y}^{(1)}$ and $\mathbf{y}^{(2)}$ equals
$\boldsymbol{\rho}$ i.e. 
\[
\text{\rm{Corr}}\left(y_{k}^{(1)},y_{k}^{(2)}\right)=\rho_{k},\quad \forall k\in \{1, \ldots, K\}
\]
% \[
% \text{\rm{Corr}}\left(\mathbf{y}^{(1)},\mathbf{y}^{(2)}\right)=\boldsymbol{\rho}
% \]
% or equivalently, 

\end{theorem*}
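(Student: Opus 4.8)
The plan is to compute directly the mean, variance, and covariance of the entries of $\mathbf{y}^{(1)}$ and $\mathbf{y}^{(2)}$ using linearity of expectation and the bilinearity of covariance, then assemble the correlation. Fix an index $k\in\{1,\dots,K\}$. By definition $y_k^{(p)} = \sum_{i=1}^{Q} W_{ki}^{(p)} h_i$, so $\mathrm{Cov}\bigl(y_k^{(1)},y_k^{(2)}\bigr) = \sum_{i=1}^{Q}\sum_{j=1}^{Q} h_i h_j \,\mathrm{Cov}\bigl(W_{ki}^{(1)},W_{kj}^{(2)}\bigr)$. The cross-covariance assumption in \eqref{eq:assumption1} says $\mathrm{Cov}\bigl(W_{ki}^{(1)},W_{kj}^{(2)}\bigr) = \rho_k\sigma_1\sigma_2\,\delta_{ij}$, which collapses the double sum to $\rho_k\sigma_1\sigma_2 \sum_{i=1}^{Q} h_i^2 = \rho_k\sigma_1\sigma_2\,\|\mathbf{h}\|^2$.

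Next I would apply the same reasoning to the variances. Using the within-matrix covariance $\mathrm{Cov}\bigl(W_{ki}^{(p)},W_{kj}^{(p)}\bigr) = \sigma_p^2\,\delta_{ij}$, the double sum for $\mathbb{V}\bigl(y_k^{(p)}\bigr) = \sum_{i,j} h_i h_j\,\mathrm{Cov}\bigl(W_{ki}^{(p)},W_{kj}^{(p)}\bigr)$ again collapses to $\sigma_p^2\,\|\mathbf{h}\|^2$ for each $p\in\{1,2\}$. I should note that $\|\mathbf{h}\|^2 > 0$ (or handle the degenerate $\mathbf{h}=\mathbf{0}$ case trivially, since then all quantities vanish and the correlation is conventionally undefined), so that the denominator in the correlation formula is nonzero.

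Finally I would combine these:
\[
\mathrm{Corr}\bigl(y_k^{(1)},y_k^{(2)}\bigr) = \frac{\mathrm{Cov}\bigl(y_k^{(1)},y_k^{(2)}\bigr)}{\sqrt{\mathbb{V}(y_k^{(1)})}\sqrt{\mathbb{V}(y_k^{(2)})}} = \frac{\rho_k\sigma_1\sigma_2\,\|\mathbf{h}\|^2}{\sigma_1\|\mathbf{h}\| \cdot \sigma_2\|\mathbf{h}\|} = \rho_k,
\]
which holds for every $k$, proving the claim. One could alternatively phrase this as an application of Lemma~\ref{chd} row by row — each row of $\mathbf{W}^{(2)}$ is built to be correlated with the corresponding row of $\mathbf{W}^{(1)}$ via the Cholesky construction — but the direct bilinear computation is cleaner and self-contained.

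The computation is entirely routine; there is no serious obstacle. The only thing requiring mild care is bookkeeping with the Kronecker deltas to ensure the cross terms ($i\neq j$) vanish and that the isotropy assumption \eqref{eq:assumption1} is used in exactly the right place, plus the remark that the result is an \emph{elementwise} (not matrix) correlation, so one works one coordinate $k$ at a time and never needs joint structure across different rows.
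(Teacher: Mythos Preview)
Your proposal is correct and follows essentially the same approach as the paper's proof: both compute $\mathbb{V}(y_k^{(p)})=\sigma_p^2\|\mathbf{h}\|^2$ and $\mathrm{Cov}(y_k^{(1)},y_k^{(2)})=\rho_k\sigma_1\sigma_2\|\mathbf{h}\|^2$ by expanding the bilinear form and collapsing the double sum via the Kronecker deltas in \eqref{eq:assumption1}, then take the ratio. The only cosmetic difference is that the paper does the variances first and the covariance second, whereas you reverse the order and add the (sensible) remark about the degenerate case $\mathbf{h}=\mathbf{0}$.
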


%%%%%%%%%%%%%%%%%%%%%% PROOF 3 %%%%%%%%%%%%%%%%%%%%%%%%%%%
\begin{proof}
First we prove that for $p\in\{1,2\}$ and $k\in\{1,\ldots,K\}$
\begin{equation}
\mathbb{V}\left(y_{k}^{(p)}\right)=\sigma_{p}^{2}\left\Vert \mathbf{h}\right\Vert ^{2}\label{eq:var}
\end{equation}
Note that
\begin{align*}
\mathbb{V}\left(y_{k}^{(p)}\right) & =\mathbb{E}\left[\left(\sum_{i=1}^{Q}W_{ki}^{(p)}h_{i}-\mathbb{E}\left[\sum_{i=1}^{Q}W_{ki}^{(p)}h_{i}\right]\right)^{2}\right]\\
 & =\mathbb{E}\left[\left(\sum_{i=1}^{Q}\left(W_{ki}^{(p)}-\mathbb{E}W_{ki}^{(p)}\right)h_{i}\right)^{2}\right]\\
 & =\mathbb{E}\left[\sum_{i,j}^{Q}\left(W_{ki}^{(p)}-\mathbb{E}W_{ki}^{(p)}\right)\left(W_{kj}^{(p)}-\mathbb{E}W_{kj}^{(p)}\right)h_{i}h_{j}\right]\\
 & =\sum_{i,j}^{Q}\text{Cov}(W_{ki}^{(p)},W_{kj}^{(p)})h_{i}h_{j}
\end{align*}
By \eqref{eq:assumption1},
\[
\mathbb{V}\left(y_{k}^{(p)}\right)=\sum_{i,j}^{Q}\text{Cov}(W_{ki}^{(p)},W_{kj}^{(p)})h_{i}h_{j}=\sum_{i,j}^{Q}\sigma_{p}^{2}h_{i}h_{j}\delta_{ij}=\sum_{i=1}^{Q}\sigma_{p}^{2}h_{i}^{2}=\sigma_{p}^{2}\|\mathbf{h}\|^{2}
\]
so \eqref{eq:var} is proved. Next we prove 
\begin{equation}
\text{Cov}(y_{k}^{(1)},y_{k}^{(2)})=\rho_{k}\sigma_{1}\sigma_{2}\left\Vert \mathbf{h}\right\Vert ^{2}\label{eq:covar}
\end{equation}
Observe that
\begin{align*}
\text{Cov}(y_{k}^{(1)},y_{k}^{(2)}) & =\mathbb{E}\left[\left(y_{k}^{(1)}-\mathbb{E}y_{k}^{(1)}\right)\left(y_{k}^{(2)}-\mathbb{E}y_{k}^{(2)}\right)\right]\\
 & =\mathbb{E}\left[\left(\sum_{i=1}^{Q}W_{ki}^{(1)}h_{i}-\mathbb{E}\left[\sum_{i=1}^{Q}W_{ki}^{(1)}h_{i}\right]\right)\left(\sum_{j=1}^{Q}W_{kj}^{(2)}h_{j}-\mathbb{E}\left[\sum_{j=1}^{Q}W_{kj}^{(2)}h_{j}\right]\right)\right]\\
 & =\mathbb{E}\left[\sum_{i,j}^{Q}\left(W_{ki}^{(1)}-\mathbb{E}W_{ki}^{(1)}\right)\left(W_{kj}^{(2)}-\mathbb{E}W_{kj}^{(2)}\right)h_{i}h_{j}\right]\\
 & =\sum_{i,j}^{Q}\text{Cov}(W_{ki}^{(1)},W_{kj}^{(2)})h_{i}h_{j}
\end{align*}
Similarly, %by \eqref{eq:assumption2},
\[
\text{Cov}(y_{k}^{(1)},y_{k}^{(2)})=\sum_{i,j}^{Q}\text{Cov}(W_{ki}^{(1)},W_{kj}^{(2)})h_{i}h_{j}=\sum_{i,j}^{Q}\rho_{k}\sigma_{1}\sigma_{2}h_{i}h_{j}\delta_{ij}=\rho_{k}\sigma_{1}\sigma_{2}\left\Vert \mathbf{h}\right\Vert ^{2}
\]
Hence \eqref{eq:covar} is proved. Therefore by \eqref{eq:var} and \eqref{eq:covar}
\[
\text{\rm{Corr}}(y_{k}^{(1)},y_{k}^{(2)})=\frac{\text{Cov}(y_{k}^{(1)},y_{k}^{(2)})}{\sqrt{\mathbb{V}(y_{k}^{(1)})}\sqrt{\mathbb{V}(y_{k}^{(2)})}}=\frac{\rho_{k}\sigma_{1}\sigma_{2}\left\Vert \mathbf{h}\right\Vert ^{2}}{\sqrt{\sigma_{1}^{2}\left\Vert \mathbf{h}\right\Vert ^{2}}\sqrt{\sigma_{2}^{2}\left\Vert \mathbf{h}\right\Vert ^{2}}}=\rho_{k}
\]
The theorem is proved.
\end{proof}
%%%%%%%%%%%%%%%%%%%%%% PROOF 3 %%%%%%%%%%%%%%%%%%%%%%%%%%%

%
% Remark
%
\begin{rem*}
Recall the definition of Cholesky transform: for $-1 < \rho < 1$
\begin{align}
\label{ch-transform}
\mathscr{T}_{(\rho, \mu_{W},\sigma_{W},\sigma_{Z})}(w,z):=\rho\mu_{W}+\sqrt{1-\rho^{2}}\left(\rho\frac{\sigma_{Z}}{\sigma_{W}}(w-\mu_{W})+ \sqrt{1-\rho^{2}} z\right)
\end{align}
Note that we do not assume $W$ and $Z$ should follow typical distributions. Hence every above theorems hold for general class of random variables. Additionally, by Theorem \ref{transform} and \eqref{ch-transform}, $\tilde{W}$ has the following $\rho$-dependent behaviors;
\[
\mathbb{E}\tilde{W}\to
\begin{cases}
\mu_{W} &: \rho \to 1 \\
0 &: \rho \to 0 \\
-\mu_{W} &: \rho \to -1
\end{cases}
,\quad \mathbb{V}(\tilde{W})\to
\begin{cases}
0 &: \rho \to \pm 1 \\
\sigma_{Z}^{2} &: \rho \to 0
\end{cases}
\]
Thus strongly correlated weights $\tilde{W}$ i.e. $\rho\approx 1$, provide prediction with confidence while uncorrelated weights encompass uncertainty. These different behaviors of weights perform regularization and preclude over-fitting caused by bad data since uncorrelated and negative correlated weights absorb vague and outlier pattern, respectively.
\end{rem*}

%
% Experiments
%
\section{More Experiments}

% In this section, we validate the performance of ChoiceNet
% on both regression problems in Section \ref{subsec:reg}
% and classification problems in Section \ref{subsec:cls}
% where we mainly focus on evaluating the robustness
% of the proposed method.

%
% Regression Tasks
%
\subsection{Regression Tasks} 
\label{subsec:reg}

We conduct three regression experiments:
1) a synthetic scenario where the training dataset
contains outliers sampled from other distributions,
2) using a Boston housing dataset with synthetic outliers,
3) a behavior cloning scenario where the
demonstrations are collected from 
both expert and adversarial policies. 

%
% Synthetic Example
%
\paragraph{Synthetic Example}
We first apply ChoiceNet to a simple one-dimensional
regression problem of fitting 
$f(x) = \cos(\frac{\pi}{2}x) \exp(-(\frac{x}{2})^2)$
where $x \in [-3,+3]$ as shown in 
Figure \ref{fig:1d_reg}.
ChoiceNet is compared with 
a naive multilayer perceptron (MLP),
Gaussian process regression (GPR) \citep{Rasmussen_06},
leveraged Gaussian process regression (LGPR) with
leverage optimization \citep{SJChoi_16}, and 
robust Gaussian process regression (RGPR)
with an infinite Gaussian process mixture model 
\citep{Rasmussen_02_imgp} are also compared.
ChoiceNet has five mixtures and it has
two hidden layers with $32$ nodes with a ReLU activation function.
For the GP based methods, we use a squared-exponential
kernel function and the hyper-parameters are
determined using a simple median trick
\citep{Dai_14}\footnote{
A median trick selects the length parameter of 
a kernel function to be the median of 
all pairwise distances between training data.}. 
To evaluate its performance in corrupt datasets,
we randomly replace the original target values
with outliers whose output values are
uniformly sampled from $-1$ to $+3$.
We vary the outlier rates from $0\%$ (clean)
to $80\%$ (extremely noisy). 

Table \ref{tbl:1d_reg} illustrates the RMSEs (root mean square errors)
between the reference target function 
and the fitted results of
ChoiceNet and other compared methods.
Given an intact training dataset, all the methods
show stable performances in that the RMSEs
are all below $0.1$.
Given training datasets whose outlier rates
exceed $40\%$, however,
only ChoiceNet successfully fits
the target function
whereas the other methods fail
as shown in Figure \ref{fig:1d_reg}.

%
% Figure
%
\begin{figure}
	\centering
	\includegraphics[width=.9\columnwidth]{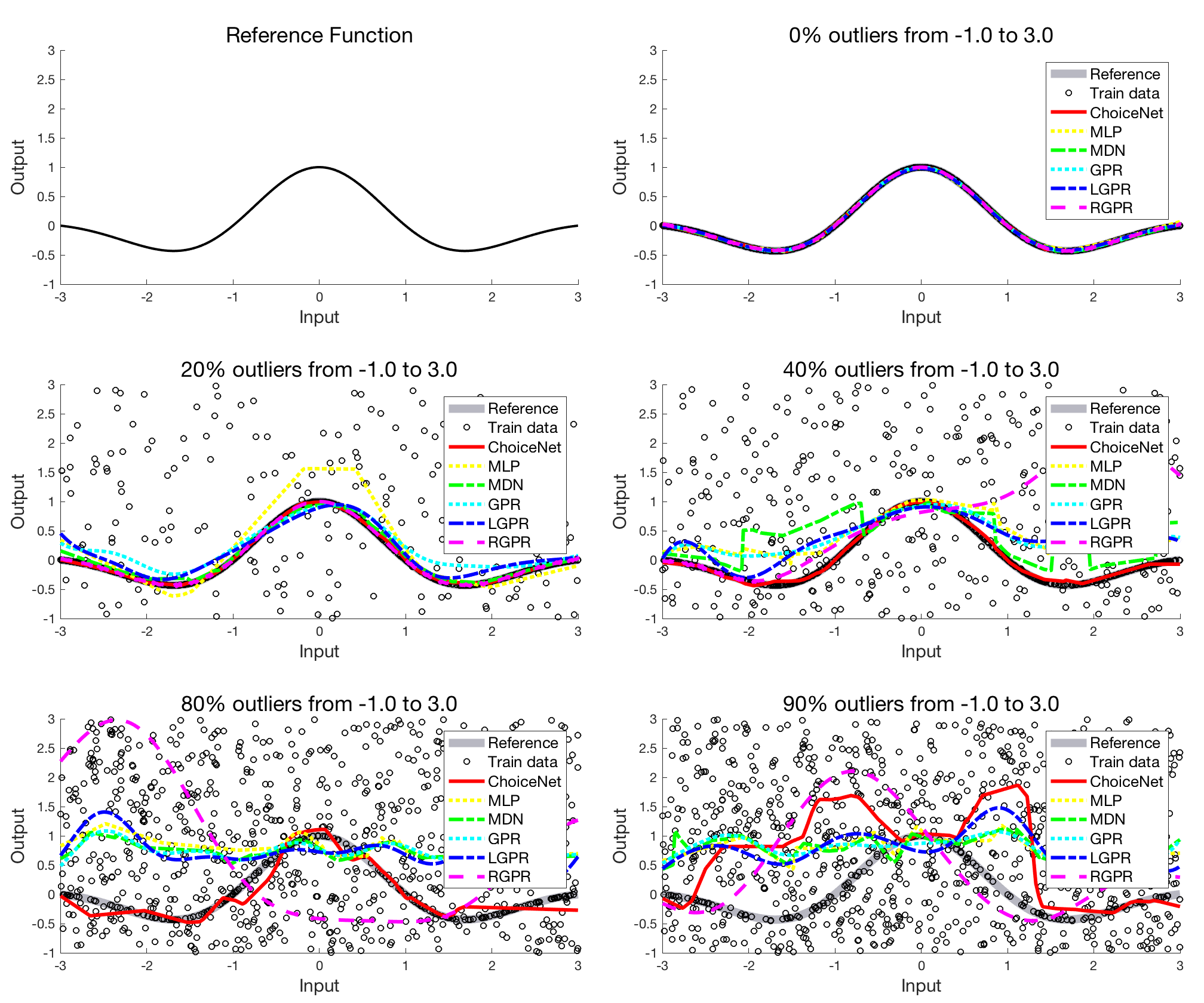}
	\caption{Reference function and
		fitting results of compared methods
		on different outlier rates,
		$0\%$,$20\%$
		$40\%$, $80\%$, and $90\%$).}
	\label{fig:1d_reg}
\end{figure}

%
% Table
% 
\begin{table}[t!]  \center 
\small
\caption{The RMSEs of compared methods
	on synthetic toy examples}
\label{tbl:1d_reg}
\begin{tabular}{ l | c c c c c c  } 
    \toprule
    Outliers & ChoiceNet & MLP
                    & GPR & LGPR & RGPR & MDN\\
	\midrule
	\textit{$0\%$}		
	& $0.034$  	& $0.039$
	& $\bf{0.008}$ 	& $0.022$  & $0.017$ & $0.028$
	\\ 
	\textit{$20\%$}		
	& $0.022$ 	 	& $0.413$
	& $0.280$ 	& $0.206$ 	& $\bf{0.013}$ & $0.087$
	\\ 
	\textit{$40\%$}		
	& $\bf{0.018}$ 	 	& $0.452$
	& $0.447$ 	& $0.439$	& $1.322$ & $0.565$
	\\ 
	\textit{$60\%$}		
	& $\bf{0.023}$ 	 	& $0.636$
	& $0.602$ 	& $0.579$	& $0.738$ & $0.645$
	\\ 
	\textit{$80\%$}		
	& $\bf{0.084}$ 		& $0.829$
	& $0.779$	& $0.777$	& $1.523$ & $0.778$
	\\
	\bottomrule
\end{tabular}
\end{table}

%
% Boston Housing
%
\paragraph{Boston Housing Dataset}
\label{subsubsec:bh}

Here, we used a real world dataset, a Boston housing price dataset,
and checked the robustness of the proposed method and 
compared with standard multi-layer perceptrons with four different types of loss functions: 
standard $L2$-loss, $L1$-loss which is known to be robust to outliers, 
a robust loss (RL) function proposed in
\citep{Belagiannis_15_RobustReg}, and a leaky robust loss (LeakyRL) function. 
We further implement the leaky version of \citep{Belagiannis_15_RobustReg}
in that the original robust loss function 
with Tukey’s biweight function discards the instances 
whose residuals exceed certain threshold.

%
% Table
% 
\begin{table}[t!]  \center 
\small
\caption{The RMSEs of compared methods
	on the Boston Housing Dataset}
\label{tbl:boston}
\begin{tabular}{ l | c c c c c c c } 
    \toprule
    Outliers & ChoiceNet & $L2$
                    & $L1$ & RL & LeakyRL &  MDN \\
	\midrule
	\textit{$0\%$}		
	& $3.29$  	& \bf{3.22}
	& $3.26$ 	& $4.28$  & $3.36$ & $3.46$
	\\ 
	\textit{$10\%$}		
	& \bf{3.99} 	 	& $5.97$
	& $5.72$ 	& $6.36$ 	& $5.71$ & $6.5$
	\\ 
	\textit{$20\%$}		
	& \bf{4.77} 	 	& $7.51$
	& $7.16$ 	& $8.08$	& $7.08$ & $8.62$
	\\ 
	\textit{$30\%$}		
	& \bf{5.94} 	 	& $9.04$
	& $8.65$ 	& $10.54$	& $8.67$ & $8.97$
	\\ 
	\textit{$40\%$}		
	& \bf{6.80} 		& $9.88$
	& $9.69$	& $10.94$	& $9.68$ & $10.44$
	\\
	\bottomrule
\end{tabular}
\end{table}

%
% Table: MuJoCo
%
\begin{table*}[t!]
\center
\small
\caption{Average returns of compared methods
   on behavior cloning problems using MuJoCo}
\label{tbl:mujoco}
\begin{tabular}{l | c c c | c c c }
\toprule
\multirow{2}{*}{Outliers} & \multicolumn{3}{c|}{HalfCheetah} & \multicolumn{3}{c}{Walker2d} \\
                          & ChoiceNet  & MDN     & MLP      & ChoiceNet & MDN     & MLP    \\
\midrule
%\textit{$0\%$}                     & 1907.05    & 472.64 & 1998.93  & 1784.75   & 701.07 & 2620.62  \\
\textit{$10\%$}                   & \bf{2068.14}    & 192.53 & 852.91   & \bf{2754.08}   & 102.99 & 537.42 \\
\textit{$20\%$}                   & \bf{1498.72}    & 675.94 & 372.90   & \bf{1887.73}   & 95.29 & 1155.80 \\
\textit{$30\%$}                   & \bf{2035.91}    & 363.08 & 971.24   & -267.10   & \bf{-260.80} & -728.39 \\
%\textit{$40\%$}                   & -1821.55   & \bf{19.56} & -2067.11 & -557.99   & \bf{-480.86} & -495.53 \\
\bottomrule
\end{tabular}
\end{table*}

%
% Behavior cloning Example
%
\paragraph{Behavior Cloning Example}
\label{subsubsec:bc}

In this experiment, we apply ChoiceNet to
behavior cloning tasks when given demonstrations with mixed qualities
where the proposed method is compared with
a MLP and a MDN in two locomotion tasks:
\textit{HalfCheetah} and \textit{Walker2d}.
The network architectures are identical to
those in the synthetic regression example tasks.
To evaluate the robustness of ChoiceNet,
we collect demonstrations from both
an expert policy and an adversarial policy
where two policies are trained 
by solving the corresponding reinforcement learning problems using
the state-of-the-art proximal policy optimization (PPO)
\citep{Schulman_17proximal}.
For training adversarial policies for both tasks, we flip the signs of the 
directional rewards so that the agent gets incentivized by going backward. 
We evaluate the performances of the compared methods
using $500$ state-action pairs with different mixing ratio
and measure the average return over $100$ consecutive episodes.
The results are shown in Table \ref{tbl:mujoco}.
In both cases, ChoiceNet outperforms compared methods by a significant margin.
Additional behavior cloning experiments for autonomous driving can be found
in the supplement material.

%
% Track Driving Experiments
%
\paragraph{Autonomous Driving Experiment}
\label{subsubsec:track2}

In this experiment, we apply ChoiceNet to
a autonomous driving scenario
in a simulated environment.
In particular, the tested methods are
asked to learn the policy 
from driving demonstrations collected from 
both safe and careless driving modes.
We use the same set of methods used
for the previous task.
The policy function is defined as a mapping 
between four dimensional input features
consist of three frontal distances
to left, center, and right lanes
and lane deviation distance from the center
of the lane to the desired heading. 
Once the desired heading is computed,
the angular velocity of a car is computed by
$10*(\theta_{\text{desired}}-\theta_{\text{current}})$
and the directional velocity is fixed to $10m/s$.
% The trajectories of two different driving modes
% are shown in Figure \ref{fig:exp_drive_a}
% and \ref{fig:exp_drive_b}, respectively. 
The driving demonstrations are 
collected from keyboard inputs by human users.
The objective of this experiment is to assess
its performance on a training set generated from 
two different distributions.
We would like to note that this task does not
have a reference target function in that
all demonstrations are collected manually. 
Hence, we evaluated the performances of the
compared methods by running the trained policies
on a straight track by randomly deploying 
static cars.

Table \ref{tbl:track1} and 
Table \ref{tbl:track2} indicate collision rates
and RMS lane deviation distances of the tested
methods, respectively, where the statistics are 
computed from $50$ independent runs 
on the straight lane by randomly placing
static cars as shown in Figure \ref{fig:exp_drive}.
ChoiceNet clearly outperforms compared methods
in terms of both 
safety (low collision rates) and 
stability (low RMS lane deviation distances). 
% The resulting trajectories in 
% the curved lanes are shown 
% in Figure \ref{fig:exp_drive_d}.

%
% Figure
%
\begin{figure}[!t] 
	\centering 
	\includegraphics[width=.65\columnwidth]
		{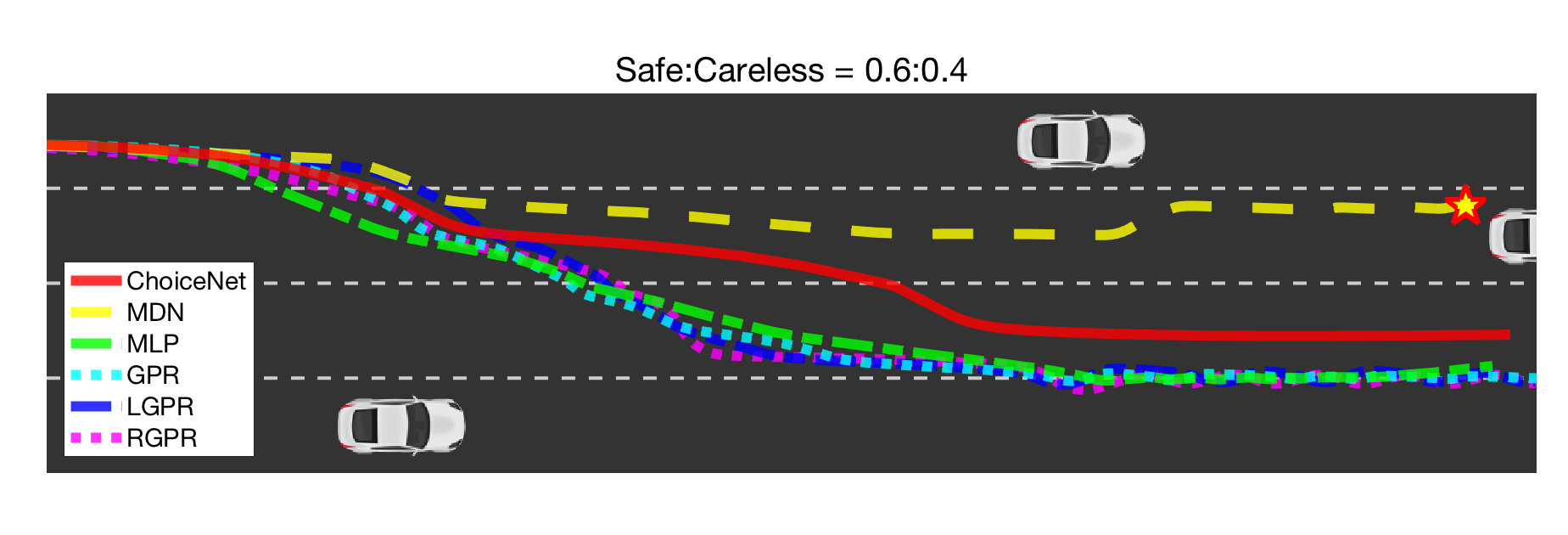}
		% \label{fig:exp_drive_c}
	\caption{
		Resulting trajectories of compared methods
		trained with mixed demonstrations.
		(best viewed in color). 
		}
	\label{fig:exp_drive}
\end{figure}

% \begin{figure}[!t] 
% 	\centering 
% 	\subfigure[]{\includegraphics[width=.37\columnwidth]
% 		{demo_safe}
% 		\label{fig:exp_drive_a}}
% 	\subfigure[]{\includegraphics[width=.37\columnwidth]
% 		{demo_careless}
% 		\label{fig:exp_drive_b}}
% 	\subfigure[]{\includegraphics[width=.5\columnwidth]
% 		{straight_track}
% 		\label{fig:exp_drive_c}}
% 	\subfigure[]{\includegraphics[width=.22\columnwidth]
% 		{track2}
% 		\label{fig:exp_drive_d}}
% 	\caption{
% 		Manually collected trajectories of
% 		(a) safe driving mode and (b) careless driving
% 		mode.
% 		(c),(d) Resulting trajectories of compared method
% 		trained with mixed demonstrations
% 		(best viewed in color). 
% 		}
% 	\label{fig:exp_drive}
% \end{figure}

%
% Table
%
\begin{table*}[t!] 
\center 
\small
\caption{Collision rates of compared methods 
on straight lanes.}
\label{tbl:track1}
\begin{tabular}{ l | c c c c c c  } 
    \toprule
    Outliers & ChoiceNet & MDN & MLP
                    & GPR & LGPR & RGPR \\
	\midrule
	\textit{$0\%$}		
	& $\bf{0\%}$ 	& $50.83\%$ 	& $\bf{0\%}$
	& $0.83\%$ & $4.17\%$  	& $3.33\%$
	\\ 
	\textit{$10\%$}	
	& $\bf{0\%}$ 	& $38.33\%$ 	& $\bf{0\%}$
	& $2.5\%$ 	& $1.67\%$ 	& $4.17\%$
	\\ 
	\textit{$20\%$}		
	& $\bf{0\%}$ 	& $41.67\%$ 	& $\bf{0\%}$
	& $7.5\%$ 	& $6.67\%$		& $10\%$
	\\ 
	\textit{$30\%$}		
	& $\bf{0\%}$ 	& $66.67\%$ 	& $1.67\%$
	& $4.17\%$ & $1.67\%$	& $7.5\%$
	\\ 
	\textit{$40\%$}		
	& $\bf{0.83\%}$ & $35\%$	& $3.33\%$
	& $6.67\%$	& $6.67\%$	& $24.17\%$
	\\
	\bottomrule
\end{tabular}
\end{table*}

\begin{table*}[t!] 
\small
\centering
\caption{Root mean square lane deviation distances (m)
of compared methods on straight lanes.}
\label{tbl:track2}
\begin{tabular}{ l | c c c c c c  } 
    \toprule
    Outliers & ChoiceNet & MDN & MLP
                    & GPR & LGPR & RGPR \\
	\midrule
	\textit{$0\%$}
	& $0.314$ & $0.723$ & $\bf{0.300}$ & $0.356$ & $0.349$ & $0.424$ 
	\\ 
	\textit{$10\%$}
	& $\bf{0.352}$ & $0.387$ & $0.438$ & $0.401$ & $0.446$ & $0.673$ 
	\\ 
	\textit{$20\%$}
	& $\bf{0.349}$ & $0.410$ & $0.513$ & $0.418$ & $0.419$ & $0.725$ 
	\\ 
	\textit{$30\%$}
	& $\bf{0.368}$ & $\bf{0.368}$ & $0.499$ & $0.455$ & $0.476$ & $0.740$ 
	\\ 
	\textit{$40\%$}
	& $\bf{0.370}$ & $0.574$ & $0.453$ & $0.453$ & $0.453$ & $0.636$ 
	\\ 
	\bottomrule
\end{tabular}
\label{tbl:track}
\end{table*}

Here, we describe the features used for the
autonomous driving experiments. 
As shown in the manuscript, we use
a four dimensional feature,
a lane deviation distance of an ego car,
and three frontal distances to the closest car
at left, center, and right lanes
as shown in 
Figure \ref{fig:track_env}. 
We upperbound the frontal distance to $40m$. 
Figure \ref{fig:exp_drive_a} and \ref{fig:exp_drive_b}
illustrate manually collected trajectories of
a safe driving mode and a careless driving mode. 

%
% Figure
%
\begin{figure}[!t] 
	\centering
	\includegraphics[width=.5\columnwidth]{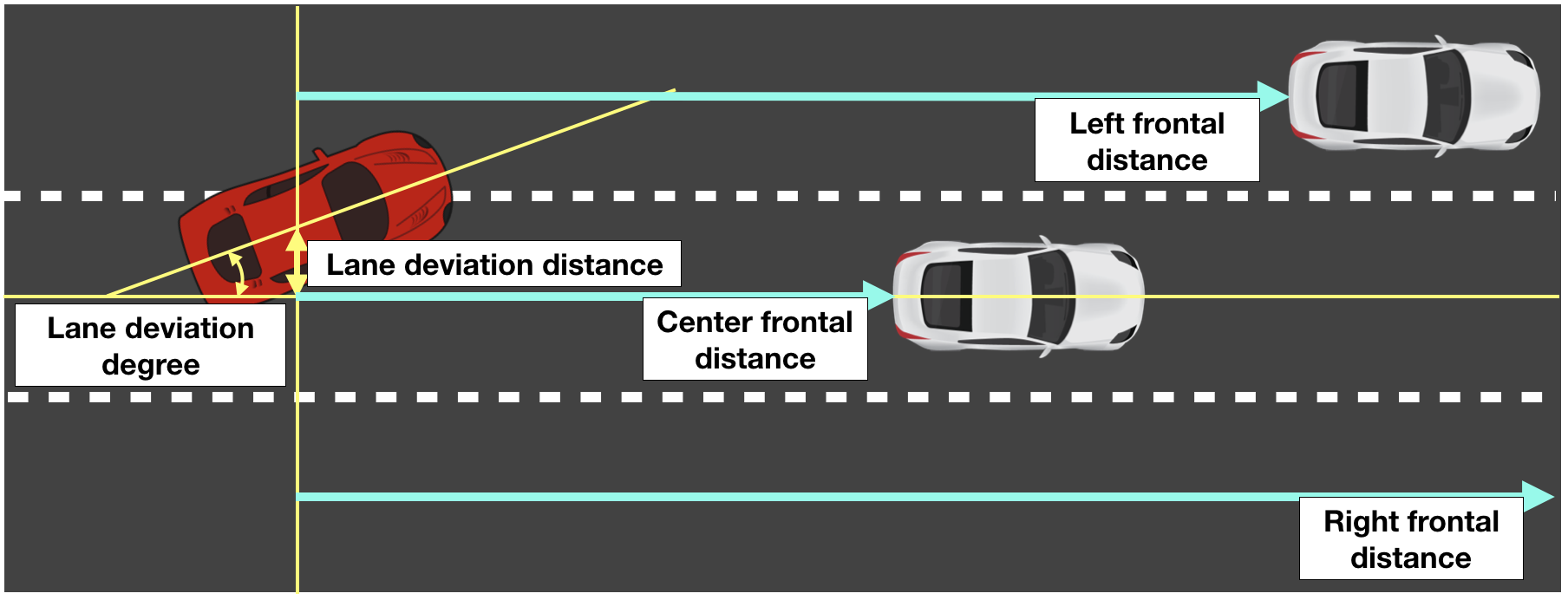}
	\caption{
	    Descriptions of the featrues of 
	    an ego red car used in
	    autonomous driving experiments.
	    }
	\label{fig:track_env}
\end{figure}

%
%	Figure
%
\begin{figure}[!t] 
	\centering 
	\subfigure[]{\includegraphics[width=.41\columnwidth]
		{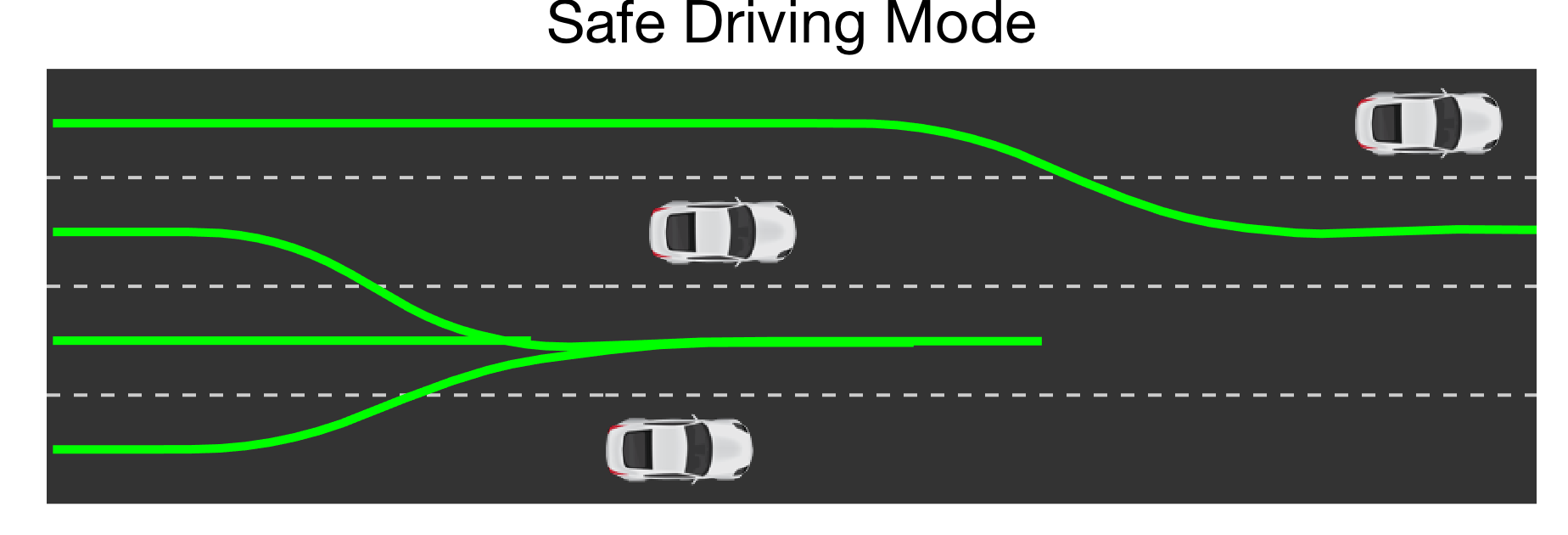}
		\label{fig:exp_drive_a}}
	\subfigure[]{\includegraphics[width=.41\columnwidth]
		{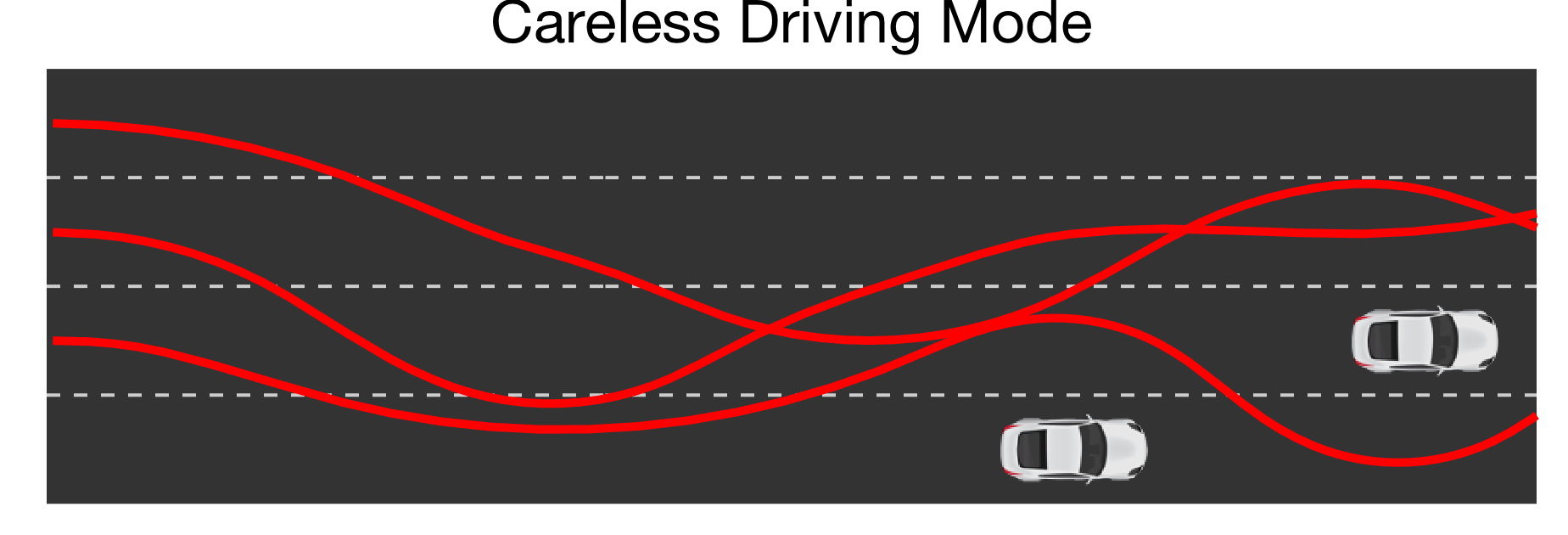}
		\label{fig:exp_drive_b}}
% 	\subfigure[]{\includegraphics[width=.55\columnwidth]
% 		{straight_track}
% 		\label{fig:exp_drive_c}}
% 	\subfigure[]{\includegraphics[width=.28\columnwidth]
% 		{track2}
% 		\label{fig:exp_drive_d}}
	\caption{
		Manually collected trajectories of
		(a) safe driving mode and (b) careless driving
		mode.
% 		(c),(d) Resulting trajectories of compared method
% 		trained with mixed demonstrations
		(best viewed in color). 
		}
	\label{fig:exp_drive2}
\end{figure}

%
% Classification experiments
%
\subsection{Classification Tasks}
\label{subsec:cls}

Here, we conduct comprehensive classification experiments
using MNIST, CIFAR-10, and Large Movie Review datasets
to evaluate the performance of ChoiceNet
on corrupt labels.
For the image datasets,
we followed two different settings to generate noisy datasets:
one following the setting in \citep{Zhang_18_mixup}
and the other from \citep{Han_18}
which covers both symmetric and asymmetric noises. 
For the Large Movie Review dataset, we simply shuffle
the labels to the other in that it only contains two labels. 

%
% MNIST
%
\paragraph{MNIST}

For MNIST experiments, we
randomly shuffle a percentage of the labels
with the corruption probability $p$
from $50\%$ to $95\%$ and compare median accuracies 
after five runs for each configuration
following the setting in \citep{Zhang_18_mixup}.

We construct two networks: 
a network with two residual blocks \citep{he2016deep} 
with $3 \times 3 \times 64$ convolutional layers followed 
by a fully-connected layer with $256$ output units (ConvNet)
and ChoiceNet with the same two residual blocks 
followed by the MCDN block (ConvNet+CN). 
We train each network for $50$ epochs with 
a fixed learning rate of $\text{1e-5}$.

We train each network for $300$ epochs with 
a minibatch size of $256$. We begin with 
a learning rate of $0.1$, and 
it decays by $1/10$ after $150$ and $225$ epochs.
We apply random horizontal flip and random 
crop with 4$-$pixel-padding 
and use a weight decay of $0.0001$ for
the baseline network as \citep{he2016deep}. 
%However, to train ChoiceNet, we reduce the weight decay 
%rate to $1e-6$ and apply gradient clipping at $1.0$.
%We also lower the learning rate to $0.001$ 
%for the first epoch to stabilize training.

\begin{table}[t]
    \small
    \caption{
    Test accuracies on the MNIST datasets with
    corrupt labels.
    }
    \label{tab:minst_rs}
    \centering
    %\bigskip
    %\smallskip
    \begin{tabular}{llll}
        \toprule
        Corruption $p$        & Configuration   & Best & Last \\
        \midrule
        \multirow{3}{*}{50\%} & ConvNet         & 95.4 & 89.5 \\
                              & ConvNet+Mixup   & 97.2 & 96.8 \\
                              & ConvNet+CN       & \bf{99.2} & \bf{99.2} \\
                              & MDN & 97.7 & 97.7 \\
        \midrule
        \multirow{3}{*}{80\%} & ConvNet         & 86.3 & 76.9 \\
                              & ConvNet+Mixup   & 87.2 & 87.2 \\
                              & ConvNet+CN       & \bf{98.2} & \bf{97.6} \\
                              & MDN & 85.2 & 78.7 \\
        \midrule
        \multirow{3}{*}{90\%} & ConvNet         & 76.1 & 69.8 \\
                              & ConvNet+Mixup   & 74.7 & 74.7 \\
                              & ConvNet+CN       & \bf{94.7} & \bf{89.0} \\
                              & MDN & 61.4 & 50.2 \\
        \midrule
        \multirow{3}{*}{95\%} & ConvNet         & 72.5 & 64.4 \\
                              & ConvNet+Mixup   & 69.2 & 68.2 \\
                              & ConvNet+CN       & \bf{88.5} & \bf{80.0} \\
                              & MDN & 31.2 & 25.9 \\
        \bottomrule
    \end{tabular}
\end{table}

The classification results are shown in Table \ref{tab:minst_rs}
where ChoiceNet consistently outperforms ConvNet and ConvNet+Mixup
by a significant margin,
and the difference between the accuracies of ChoiceNet 
and the others becomes more clear as 
the corruption probability increases.
%Particularly, the best test accuracy of ChoiceNet reaches
%$94\%$ even when $90\%$ of the training labels are randomly shuffled. 

Here, we also present additional experimental results 
using the MNIST dataset on following three different 
scenarios:
\begin{enumerate}
    \item Biased label experiments
        where we randomly assign the percentage of 
        the training labels to label $0$.
    \item Random shuffle experiments
        where we randomly replace the percentage
        of the training labels from the 
        uniform multinomial distribution. 
    \item Random permutation experiments
        where we replace the percentage of
        the labels based on 
        the label permutation matrix
        where we follow the random permutation
        in \citep{Reed_14_bootstrap}.
\end{enumerate}

% Biased
The best and final accuracies on the intact test dataset
for biased label experiments are shown in
Table \ref{tab:minst_rb}. 
In all corruption rates, ChoiceNet achieves the best
performance compared to two baseline methods.
The learning curves of the biased label experiments
are depicted in Figure \ref{fig:rb}.
Particularly, 
we observe unstable learning curves regarding
the test accuracies of ConvNet and Mixup. 
As training accuracies of such methods
show stable learning behaviors,
this can be interpreted as
the networks are simply memorizing noisy labels. 
In the contrary, the learning curves of ChoiceNet show 
stable behaviors which clearly indicates
the robustness of the proposed method.

% Random shuffle
The experimental results and learning curves
of the random shuffle experiments
are shown in Table \ref{tab:minst_rs2}
and Figure \ref{fig:rs}.
The convolutional neural networks trained with Mixup show
robust learning behaviors when $80\%$
of the training labels are uniformly shuffled.
However, given an extremely noisy dataset ($90\%$
and $95\%$),
the test accuracies of baseline methods 
decrease as the number of epochs increases.
ChoiceNet shows outstanding robustness
to the noisy dataset
in that the test accuracies do not drop even after
$50$ epochs for the cases where the corruption rates
are below $90\%$.
For the $95\%$ case, however, over-fitting is occured in
all methods.

% Random Permutation
Table \ref{tab:minst_rp} and Figure \ref{fig:rp}
illustrate the results of the random permutation 
experiments.
Specifically, we change the labels of
randomly selected training data using
a permutation rule:
$(0,1,2,3,4,5,6,7,8,9) \to (7, 9, 0, 4, 2, 1, 3, 5, 6, 8)$
following \citep{Reed_14_bootstrap}.
We argue that this setting is more arduous than 
the random shuffle case in that we are intentionally 
changing the labels based on predefined permutation rules.

%
% MNIST Biased Label
%
\begin{table}[!t]
    \small
    \caption{Test accuracies on the MNIST dataset with
        biased label.}
    \label{tab:minst_rb}
    \centering
    % \bigskip
    \smallskip
    \begin{tabular}{llll}
        \toprule
        Corruption $p$        & Configuration   & Best & Last \\
        \midrule
        \multirow{3}{*}{25\%} & ConvNet         & 95.4 & 89.5 \\
                              & ConvNet+Mixup   & 97.2 & 96.8 \\
                              & ChoiceNet       & \bf{99.2} & \bf{99.2} \\
        \midrule
        \multirow{3}{*}{40\%} & ConvNet         & 86.3 & 76.9 \\
                              & ConvNet+Mixup   & 87.2 & 87.2 \\
                              & ChoiceNet       & \bf{98.2} & \bf{97.6} \\
        \midrule
        \multirow{3}{*}{45\%} & ConvNet         & 76.1 & 69.8 \\
                              & ConvNet+Mixup   & 74.7 & 74.7 \\
                              & ChoiceNet       & \bf{94.7} & \bf{89.0} \\
        \midrule
        \multirow{3}{*}{47\%} & ConvNet         & 72.5 & 64.4 \\
                              & ConvNet+Mixup   & 69.2 & 68.2 \\
                              & ChoiceNet       & \bf{88.5} & \bf{80.0} \\
        \bottomrule
    \end{tabular}
\end{table}

%
% MNIST Random Shuffle
%
\begin{table}[!t]
    \small
    \caption{Test accuracies on the MNIST dataset with
        corrupt label.}
    \label{tab:minst_rs2}
    \centering
    % \bigskip
    \smallskip
    \begin{tabular}{llll}
        \toprule
        Corruption $p$        & Configuration   & Best & Last \\
        \midrule
        \multirow{3}{*}{50\%} & ConvNet         & 97.1 & 95.9 \\
                              & ConvNet+Mixup   & 98.0 & 97.8 \\
                              & ChoiceNet       & \bf{99.1} & \bf{99.0} \\
        \midrule
        \multirow{3}{*}{80\%} & ConvNet         & 90.6 & 79.0 \\
                              & ConvNet+Mixup   & 95.3 & 95.1 \\
                              & ChoiceNet       & \bf{98.3} & \bf{98.3} \\
        \midrule
        \multirow{3}{*}{90\%} & ConvNet         & 76.1 & 54.1 \\
                              & ConvNet+Mixup   & 78.6 & 42.4 \\
                              & ChoiceNet       & \bf{95.9} & \bf{95.2} \\
        \midrule
        \multirow{3}{*}{95\%} & ConvNet         & 50.2 & 31.3 \\
                              & ConvNet+Mixup   & 53.2 & 26.6 \\
                              & ChoiceNet       & \bf{84.5} & \bf{66.0} \\
        \bottomrule
    \end{tabular}
\end{table}

%
% MNIST Random Permutation
%
\begin{table}[!t]
    \small
    \caption{Test accuracies on the MNIST dataset with
        randomly permutated label.}
    \label{tab:minst_rp}
    \centering
    % \bigskip
    \smallskip
    \begin{tabular}{llll}
        \toprule
        Corruption $p$        & Configuration   & Best & Last \\
        \midrule
        \multirow{3}{*}{25\%} & ConvNet         & 94.4 & 92.2 \\
                              & ConvNet+Mixup   & 97.6 & 97.6 \\
                              & ChoiceNet       & \bf{99.2} & \bf{99.2} \\
        \midrule
        \multirow{3}{*}{40\%} & ConvNet         & 77.9 & 71.8 \\
                              & ConvNet+Mixup   & 84.0 & 83.0 \\
                              & ChoiceNet       & \bf{99.2} & \bf{98.8} \\
        \midrule
        \multirow{3}{*}{45\%} & ConvNet         & 68.0 & 61.4 \\
                              & ConvNet+Mixup   & 68.9 & 55.8 \\
                              & ChoiceNet       & \bf{98.0} & \bf{97.1} \\
        \midrule
        \multirow{3}{*}{47\%} & ConvNet         & 58.2 & 53.9 \\
                              & ConvNet+Mixup   & 60.2 & 53.4 \\
                              & ChoiceNet       & \bf{92.5} & \bf{86.1} \\
        \bottomrule
    \end{tabular}
\end{table}

%
%	Figure Biased
%
\begin{figure}[!t] 
	\centering 
	\subfigure[]{\includegraphics[width=.44\columnwidth]
		{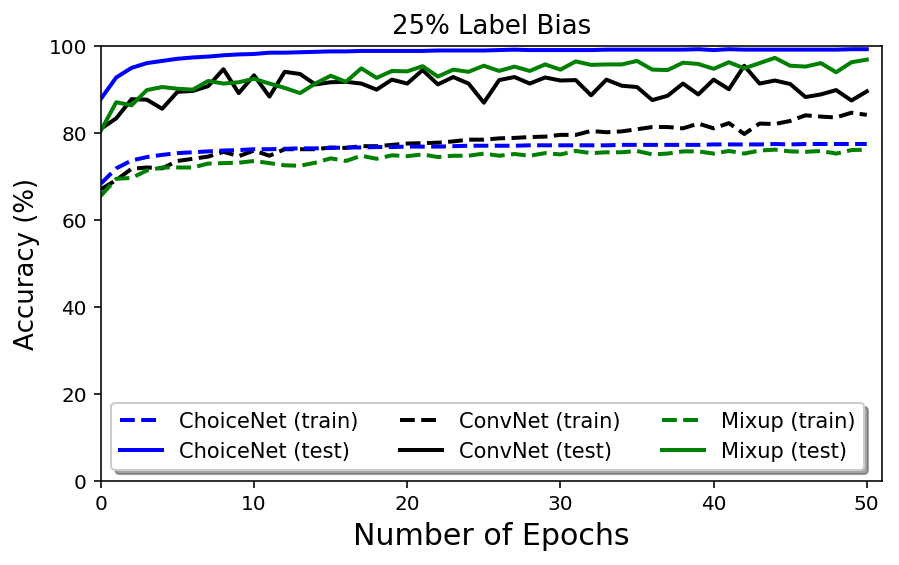}
		\label{fig:rb_a}}
	\subfigure[]{\includegraphics[width=.44\columnwidth]
		{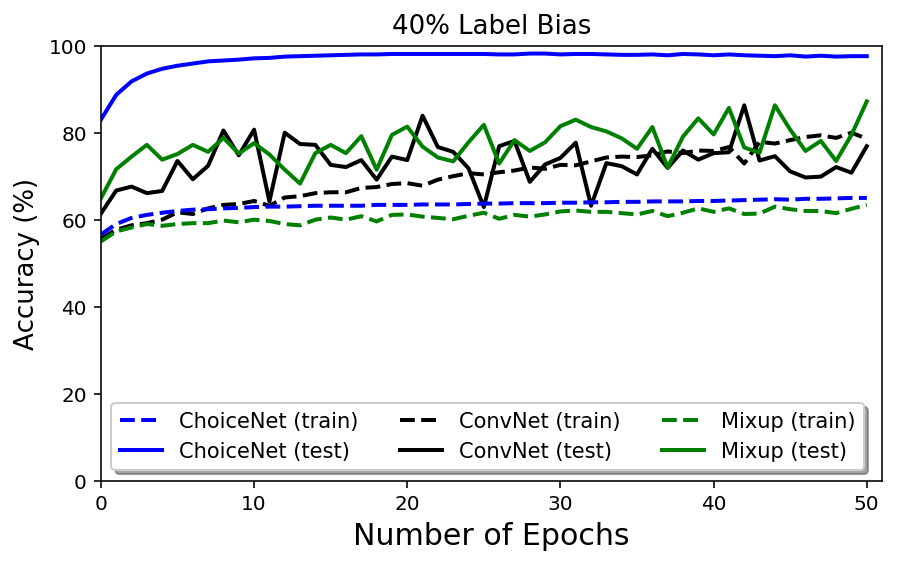}
		\label{fig:rb_b}}
	\subfigure[]{\includegraphics[width=.44\columnwidth]
		{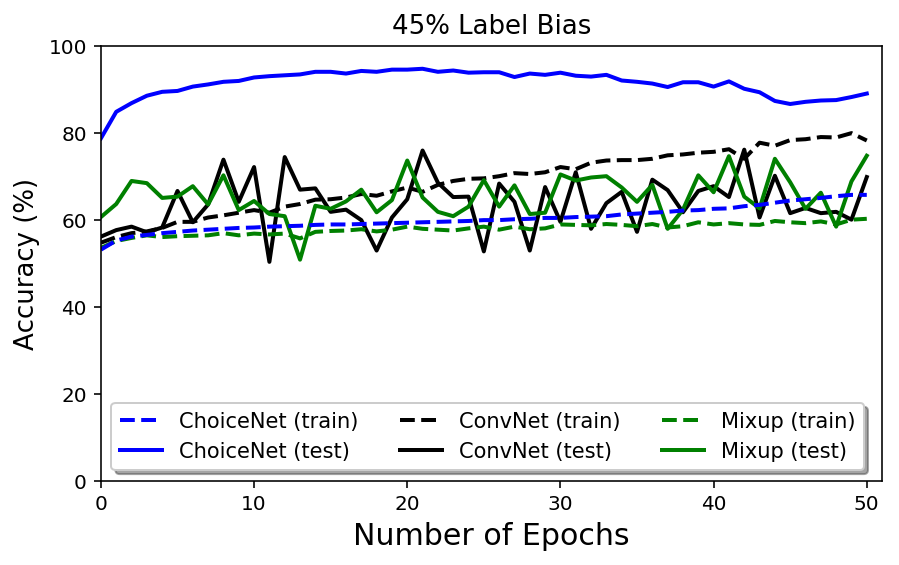}
		\label{fig:rb_c}}
	\subfigure[]{\includegraphics[width=.44\columnwidth]
		{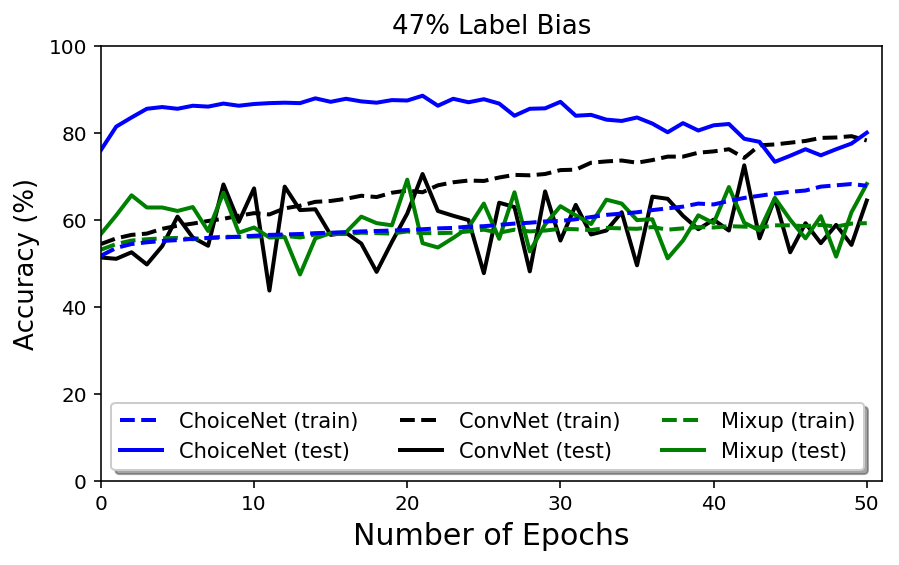}
		\label{fig:rb_d}}
	\caption{
	    Learning curves of compared methods on 
	    random bias experiments using MNIST with 
	    different noise levels. 
		}
	\label{fig:rb}
\end{figure}

%
%	Figure Random Shuffle
%
\begin{figure}[!t] 
	\centering 
	\subfigure[]{\includegraphics[width=.44\columnwidth]
		{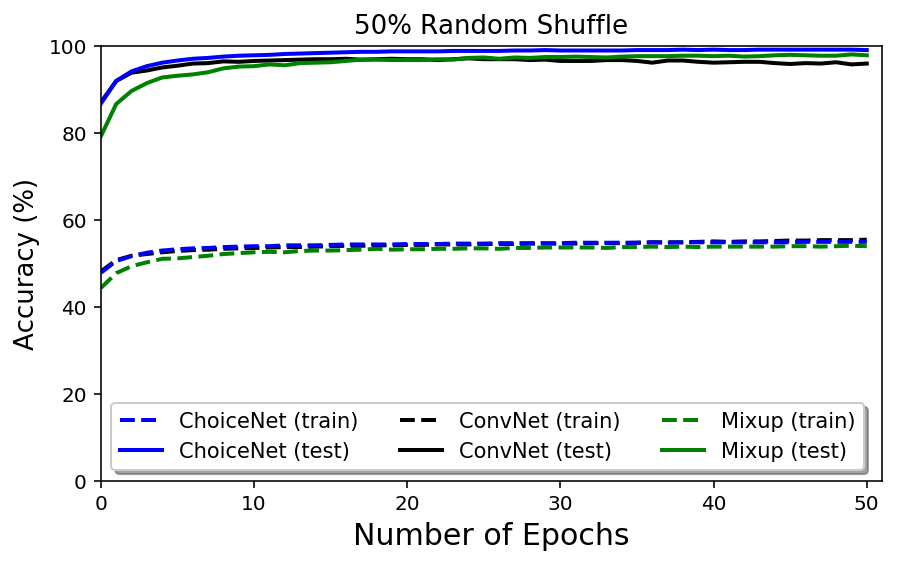}
		\label{fig:rs_a}}
	\subfigure[]{\includegraphics[width=.44\columnwidth]
		{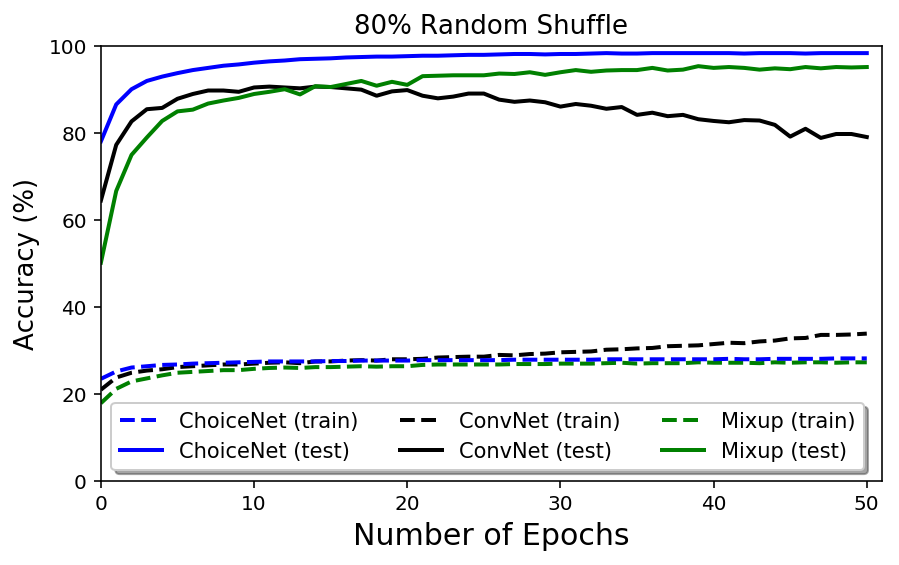}
		\label{fig:rs_b}}
	\subfigure[]{\includegraphics[width=.44\columnwidth]
		{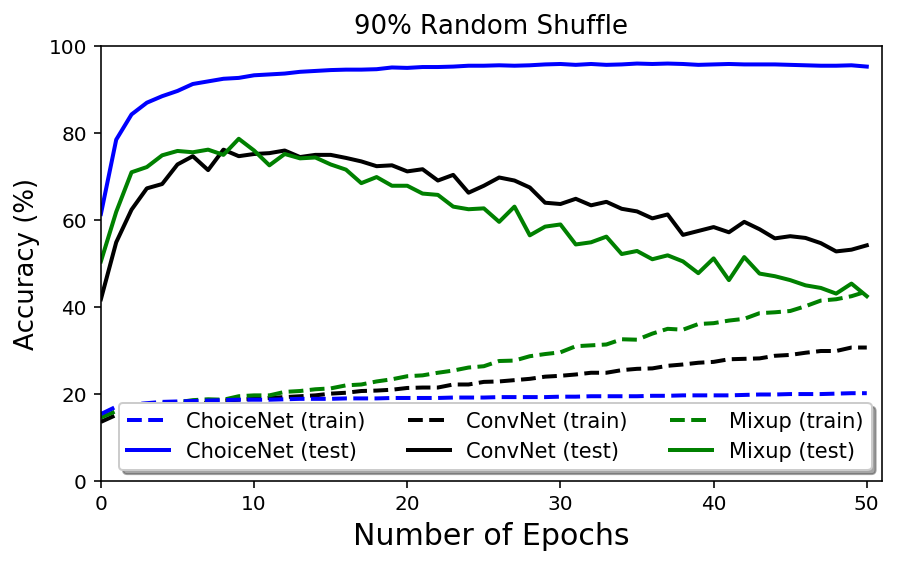}
		\label{fig:rs_c}}
	\subfigure[]{\includegraphics[width=.44\columnwidth]
		{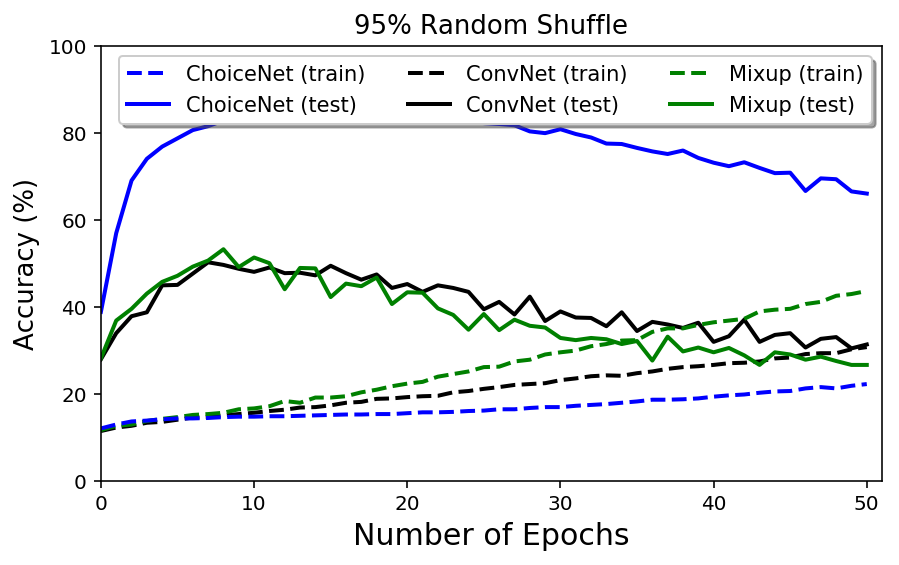}
		\label{fig:rs_d}}
	\caption{
	    Learning curves of compared methods on 
	    random shuffle experiments using MNIST with 
	    different noise levels. 
		}
	\label{fig:rs}
\end{figure}

%
%	Figure Random Permutation
%
\begin{figure}[!t] 
	\centering 
	\subfigure[]{\includegraphics[width=.42\columnwidth]
		{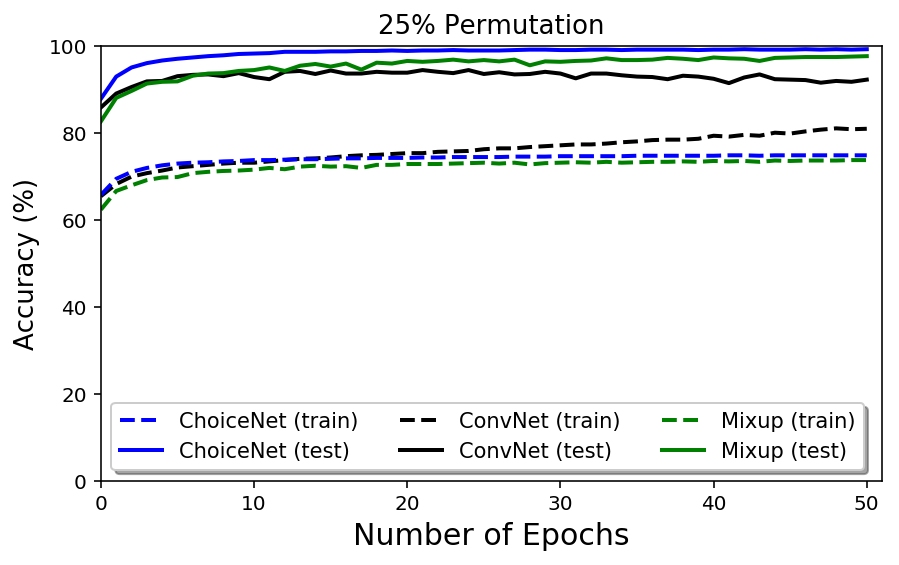}
		\label{fig:rp_a}}
	\subfigure[]{\includegraphics[width=.42\columnwidth]
		{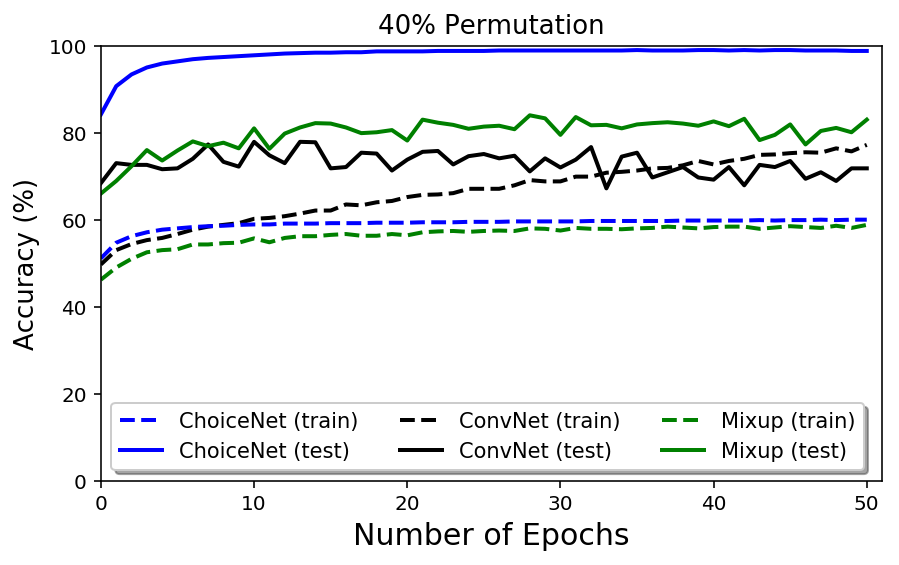}
		\label{fig:rp_b}}
	\subfigure[]{\includegraphics[width=.42\columnwidth]
		{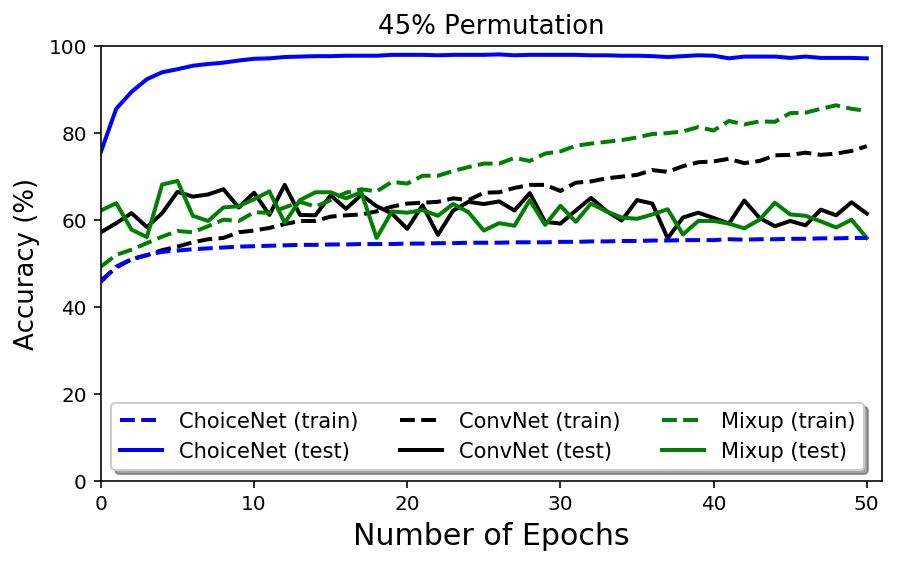}
		\label{fig:rp_c}}
	\subfigure[]{\includegraphics[width=.42\columnwidth]
		{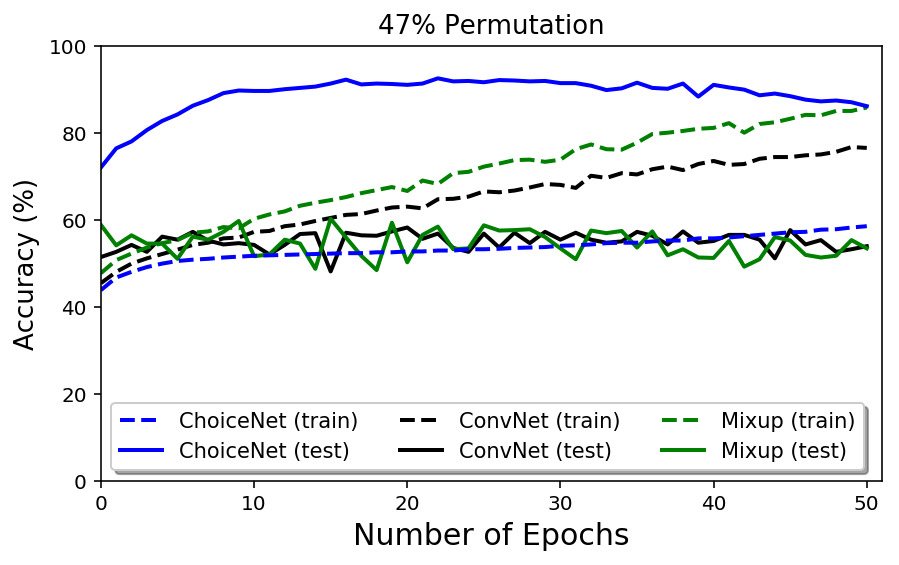}
		\label{fig:rp_d}}
	\caption{
	    Learning curves of compared methods on 
	    random permutation experiments using MNIST with 
	    different noise levels. 
		}
	\label{fig:rp}
\end{figure}

%
% CIFAR-10
%
\paragraph{CIFAR-10}

When using the CIFAR-10 dataset, 
we followed two different settings from
\citep{Zhang_18_mixup} and \citep{Han_18} 
for more comprehensive comparisons.
Note that the setting in \citep{Han_18} incorporates
both symmetric and asymmetric noises.

% Setting 1 for CIFAR-10
For the first scenario following \citep{Zhang_18_mixup},
we apply symmetric noises to the labels
and vary the corruption probabilities from $50\%$ to $80\%$.
We compare our method with
Mixup \citep{Zhang_18_mixup}, VAT \citep{Miyato18}, 
and MentorNet \citep{Jiang_17_mentornet}.\footnote{
We use the authors’ implementations available online.}
We adopt WideResNet (WRN) \citep{Zagoruyko2016WRN} with
$22$ layers and a widening factor of $4$.
To construct ChoiceNet,
we replace the last layer of WideResNet with a MCDN block. 
We set $K = 3$, $\rho_\text{max} = 0.95$,
$\lambda_\text{reg} = 0.0001$,
and $\rho_k, \pi_k, \Sigma_0$ modules consist
of two fully connected layers 
with $64$ hidden units and a ReLU activation function.
% Training method
We train each network for $300$ epochs with 
a minibatch size of $256$. We begin with 
a learning rate of $0.1$, and 
it decays by $1/10$ after $150$ and $225$ epochs.
We apply random horizontal flip and random 
crop with 4$-$pixel-padding 
and use a weight decay of $0.0001$ for 
the baseline network as \citep{he2016deep}. 
To train ChoiceNet, we set the weight decay rate to 
$1e-6$ and apply gradient clipping at $1.0$.
%We also lower the learning rate to $0.001$ 
%for the first epoch to stabilize training.

%
% Table CIFAR setting 1
%
\begin{table}[t]
    \small
    \caption{
    Test accuracies on the CIFAR-10 datasets
    with symmetric noises. 
    }
    \label{tab:cifar}
    \centering
    %\bigskip 
    \begin{tabular}{l l l c}
        \toprule
        Corruption $p$      & Configuration      & Accuarcy ($\%$) \\
        \midrule
        \multirow{7}{*}{50\%} & ConvNet         & 59.3 \\
                              & ConvNet+CN           & 84.6 \\
                              & ConvNet+Mixup  & 83.1 \\
                              & ConvNet+Mixup+CN   & \textbf{87.9} \\
                              & MentorNet  & 49.0  \\
                              & VAT  & 71.6  \\
                              & MDN &  58.6 \\
        \midrule
        \multirow{7}{*}{80\%} & ConvNet          & 27.4 \\
                              & ConvNet+CN          & 65.2 \\
                              & ConvNet+Mixup  & 62.9 \\
                              & ConvNet+Mixup+CN  & \textbf{75.4} \\
                              & MentorNet  & 21.4  \\
                              & VAT & 16.9  \\       
                              & MDN &  22.7 \\                                     
        \bottomrule
    \end{tabular}
\end{table}

% The results
Table \ref{tab:cifar} shows the test accuracies of compared methods
under different symmetric corruptions probabilities.
In all cases, ConvNet+CN outperforms the compared methods. 
We would like to emphasize that
when ChoiceNet and Mixup \citep{Zhang_18_mixup} are combined,
it achieves a high accuracy of $75\%$ even on the $80\%$ shuffled dataset.
We also note that ChoiceNet (without Mixup) outperforms WideResNet+Mixup 
when the corruption ratio is over $50\%$
on the last accuracies.

% Setting 2
We conduct additional experiments on the CIFAR-10 dataset 
to better evaluate the performance on both both symmetric and 
asymmetric noises following \citep{Han_18}:
Pair-$45\%$, Symmetry-$50\%$, and Symmetry-$20\%$.
%using the authors’ implementations available on github.
Pair-$45\%$ flips $45\%$ of each label to the next label, e.g.,
randomly flipping $45\%$ of label $1$ to label $2$ and label $2$ to label $3$, and
Symmetry-$50\%$ randomly assigns $50\%$ of each label to other labels uniformly.
We implement the a $9$-layer CNN architecture following VAT \citep{Miyato18}
and Co-teaching \citep{Han_18} for fair and accurate evaluations
and  set other configurations such as the network topology
and activations to be the same as \citep{Han_18}.
We also copied some results in \citep{Han_18} for better comparisons.
Here, we compared our method with MentorNet \citep{Jiang_17_mentornet},
Co-teaching \citep{Han_18},
and F-correction \citep{Patrini_17_LossCorrection}.
%For fair comparisons, we set other configurations such as the network topology
%and activations to be the same as \citep{Han_18}.

\begin{table}[t]
	\small
	\caption{
		Test accuracies on the CIFAR-10 dataset with by symmetric and asymmetric noises. 
	}
    \label{tab:cifar2}
    \centering
	\begin{tabular}{ l  l  l  l  }
	\toprule
      		       & Pair-$45\%$ & sym-$50\%$ & sym-$20\%$ \\ %\hline
      		       \midrule
	ChoiceNet    & 70.3        & \bf{85.2}       & \bf{91.0}       \\ 
	MentorNet    & 58.14       & 71.10      & 80.76      \\ 
	Co-teaching  & \bf{72.62}       & 74.02      & 82.32      \\ 
	F-correction & 6.61        & 59.83      & 59.83      \\ 
	MDN          & 51.4        & 58.6      & 81.4      \\
	\bottomrule
\end{tabular}
\end{table}

While our proposed method outperforms all compared methods on the
symmetric noise settings, it shows the second best performance on 
asymmetric noise settings (Pair-$45\%$).
This shows the weakness of the proposed method. 
In other words, as Pair-$45\%$ assigns $45\%$ of each label to its next label, 
the MCDN fails to correctly infer the dominant label distributions.
%which shows the weakness of the mixture-based method. 
However, we would like to note that Co-teaching \citep{Han_18}
is complementary to our method
where one can combine these two methods by using two ChoiceNets 
and update each network using Co-teaching. 
However, it is outside the scope of this paper.

Here, we also present detailed learning curves of the CIFAR-10 experiments
while varying the noise level from $20\%$ to $80\%$ following
the configurations in \citep{Zhang_18_mixup}.

%
% CIFAR-10
%
\begin{figure}[!h] 
	\centering 
%	\subfigure[]{\includegraphics[width=.49\columnwidth]
%		{plot_cifar_rs20.png}
%		\label{fig:cifar_rs20}}
%	\subfigure[]{\includegraphics[width=.49\columnwidth]
%		{plot_cifar_rs20_mixup.png}
%		\label{fig:cifar_rs20_mixup}}	
	\subfigure[]{\includegraphics[width=.42\columnwidth]
		{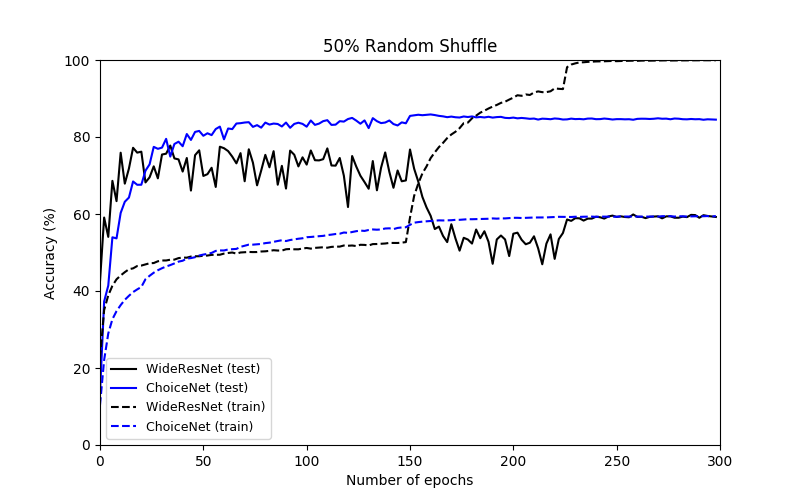}
		\label{fig:cifar_rs50}}
	\subfigure[]{\includegraphics[width=.42\columnwidth]
		{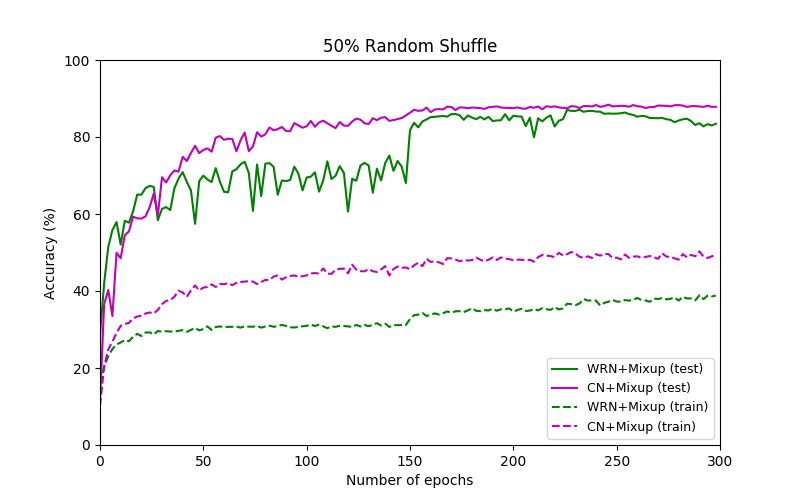}
		\label{fig:cifar_rs50_mixup}}
	\subfigure[]{\includegraphics[width=.42\columnwidth]
		{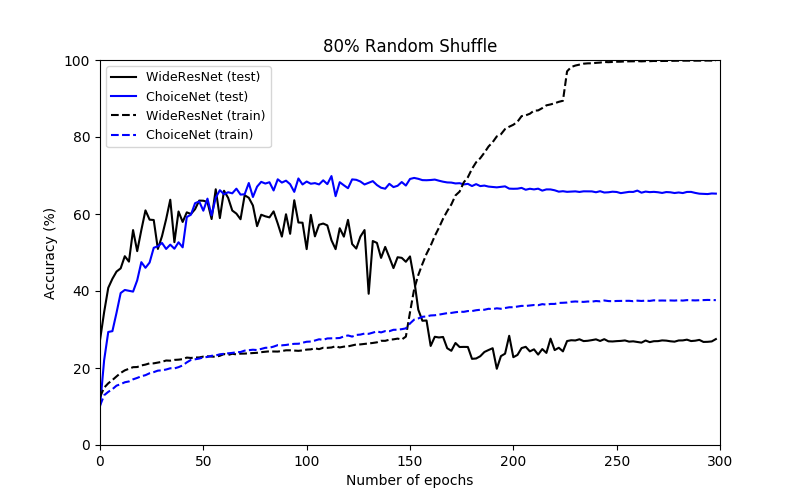}
		\label{fig:cifar_rs80}}
	\subfigure[]{\includegraphics[width=.42\columnwidth]
		{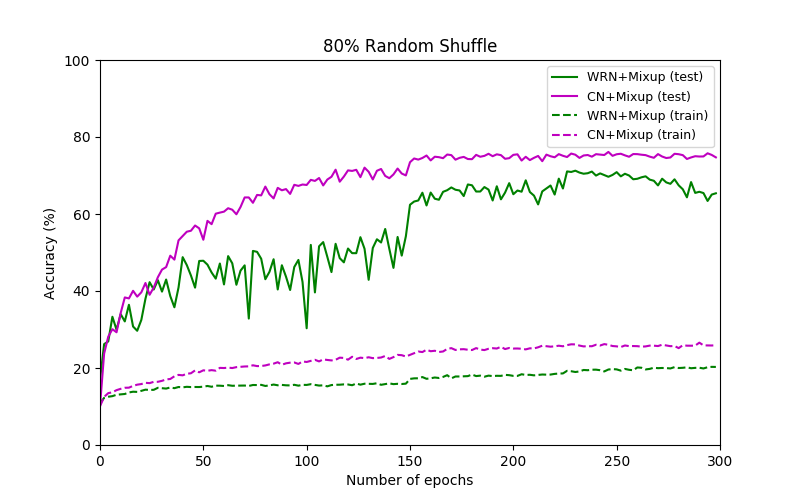}
		\label{fig:cifar_rs80_mixup}}
	\caption{
	    Learning curves of compared methods on 
	    CIFAR-10 experiments with 
	    different noise levels. 
		}
	\label{fig:cifar}
\end{figure}

%
% Large Movie Review
%
\paragraph{Large Movie Review Dataset}
We also conduct a natural language processing task using a Large Movie Review dataset
which consists of $25,000$ movie reviews for training and $25,000$ reviews for testing. 
Each movie review (sentences) is mapped to a $128$-dimensional embedding vector using 
a feed-forward Neural-Net Language Models \citep{Bengio_03}.
We evaluated the robustness of the proposed method with Mixup \citep{Zhang_18_mixup}, 
VAT \cite{Miyato18}, and naive MLP baseline by randomly flipping the labels with
a corruption probability $p$.
In all experiments, we used two hidden layers with $128$ units and ReLU activations. 
The test accuracies of the compared methods are shown in Table \ref{tbl:nlp}. 
ChoiceNet shows the superior performance in the presence of outliers
where we observe that the proposed method can be used for NLP tasks as well. 

%
% Table 
%
\begin{table}[]
	\small
	\caption{
		Test accuracies on the Large Movie Review dataset with different corruption probabilities. 
	}
	\label{tbl:nlp}
    \centering
	\begin{tabular}{l l l l l l}
	\toprule
	  Corruption $p$        & $0\%$     & $10\%$    & $20\%$    & $30\%$    & $40\%$    \\
	          \midrule
	ChoiceNet & 79.43 & \bf{79.50} & \bf{78.66} & \bf{77.1}  & \bf{73.98} \\
	Mixup     & \bf{79.77} & 78.73 & 77.58 & 75.85 & 69.63 \\
	MLP       & 79.04 & 77.88 & 75.70 & 69.05 & 62.83 \\
	VAT       & 76.40 & 72.50 & 69.20 & 65.20 & 58.30 \\
	\bottomrule
	\end{tabular}
\end{table}

%
% Classification Tasks
%
\subsection{Ablation Study on MNIST} 
\label{subsubsec:cls2}

\begin{center}
	%\vspace*{-0.07in}
	\includegraphics[width=.95\columnwidth]{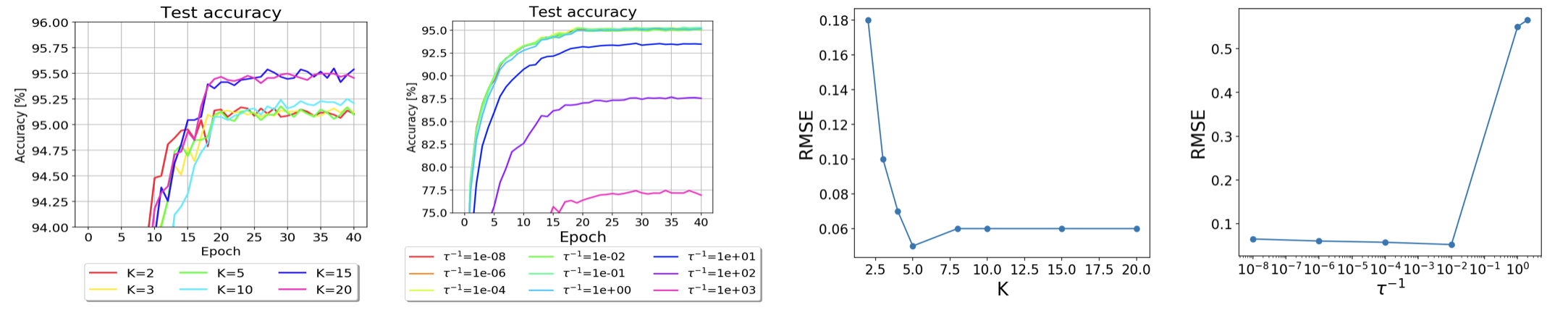}
	%\vspace*{-0.1in}
\par\end{center}
Above figures show the results of ablation study
when varying the number of mixture $K$
and the expected measurement variance $\tau^{-1}$.
Left two figures indicate test accuracies using the MNIST dataset
where $90\%$ of train labels are randomly shuffled and 
right two figures are RMSEs using a synthetic one-dimensional 
regression problem in Section 4.1. 
We observe that having bigger $K$ is beneficial to the classification 
accuracies. In fact, the results achieved here with $K$ equals $15$
and $20$ are better than the ones reported in the submitted manuscript. 
$\tau^{-1}$ does not affect much unless it is exceedingly large.
%e.g., over $10$ for the classification problem
%or over $1$ for the regression problem. 

\end{document}